\documentclass{article}

 \usepackage[preprint]{neurips_2026}

\usepackage{amsmath,amssymb,amsthm}
\usepackage{algorithm}
\usepackage{algpseudocode}
\usepackage{graphicx}
\usepackage{booktabs}
\usepackage{hyperref}
\usepackage{tikz}
\usepackage{enumitem}
\usepackage{pgfplots}
\pgfplotsset{compat=1.18}
\usetikzlibrary{patterns,fillbetween,arrows.meta,positioning,fit,backgrounds,calc,shapes.geometric}
\usepackage{colortbl}

\newtheorem{theorem}{Theorem}[section]
\newtheorem{lemma}[theorem]{Lemma}
\newtheorem{proposition}[theorem]{Proposition}

\newtheorem{claim}[theorem]{Claim}
\theoremstyle{definition}
\newtheorem{definition}[theorem]{Definition}

\theoremstyle{remark}

\newtheorem*{remark*}{Remark}

\newcommand{\E}{\mathbb{E}}
\renewcommand{\Pr}{\mathop{\mathrm{Pr}}}
\newcommand{\R}{\mathbb{R}}

\newcommand{\OPT}{\mathrm{OPT}}
\newcommand{\LP}{\mathrm{LP}}
\newcommand{\ALG}{\mathrm{ALG}}
\newcommand{\obj}{\mathrm{obj}}

\DeclareMathOperator{\VC}{VC}

\title{\textbf{On the Sparsifiability of Correlation Clustering:
Approximation Guarantees under Edge Sampling}}

\author{%
  Ibne Farabi Shihab\thanks{Equal contribution.} \\
  Department of Computer Science \\
  Iowa State University \\
  \texttt{ishihab@iastate.edu} \\
  \And
  Sanjeda Akter\footnotemark[1] \\
  Department of Computer Science \\
  Iowa State University \\
  \texttt{sanjeda@iastate.edu} \\
  \And
  Anuj Sharma \\
  Department of Civil, Construction and Environmental Engineering \\
  Iowa State University \\
  \texttt{anujs@iastate.edu} \\
}

\date{}

\begin{document}
\maketitle

\begin{abstract}
Correlation Clustering (CC) is a fundamental unsupervised learning primitive whose strongest LP-based approximation guarantees require $\Theta(n^3)$ triangle inequality constraints and are prohibitive at scale.
We initiate the study of \emph{sparsification--approximation trade-offs} for CC, asking how much edge information is needed to retain LP-based guarantees.
We establish a structural dichotomy between pseudometric and general weighted instances.
On the positive side, we prove that the VC dimension of the clustering disagreement class is exactly $n{-}1$, yielding additive $\varepsilon$-coresets of optimal size $\tilde{O}(n/\varepsilon^2)$; that at most $\binom{n}{2}$ triangle inequalities are active at any LP vertex, enabling an exact cutting-plane solver; and that a sparsified variant of LP-PIVOT, which imputes missing LP marginals via triangle inequalities, achieves a robust $\frac{10}{3}$-approximation (up to an additive term controlled by an empirically computable imputation-quality statistic $\overline{\Gamma}_w$) once $\tilde{\Theta}(n^{3/2})$ edges are observed, a threshold we prove is sharp.
On the negative side, we show via Yao's minimax principle that without pseudometric structure, any algorithm observing $o(n)$ uniformly random edges incurs an unbounded approximation ratio, demonstrating that the pseudometric condition governs not only tractability but also the robustness of CC to incomplete information.
\end{abstract}

\section{Introduction}

Modern data pipelines routinely produce pairwise similarity judgments at scales where the number of pairs reaches billions.
Correlation Clustering (CC) \cite{BansalBlumChawla2004} is the canonical formalization: given a signed complete graph $G = (V, E^+ \uplus E^-)$ encoding pairwise agreements ($+$) and disagreements ($-$), partition the vertices to minimize the total weight of \emph{misclassified} edges (positive edges cut across clusters, negative edges left within).
CC underpins community detection \cite{ChenSanghaviXu2012}, entity resolution \cite{Kalashnikov2008}, and label aggregation \cite{Agrawal2009}, and its theoretical study has driven a rich line of approximation algorithms.

The strongest known guarantees \cite{CaoCohenAddad2024, CohenAddadLeeNewman2023} solve a linear programming (LP) relaxation with $\binom{n}{2}$ variables and $3\binom{n}{3}$ triangle inequality constraints, then round the fractional solution.
This LP-based pipeline achieves approximation ratios approaching $1.485$ \cite{CohenAddadEtAl2024stoc}, but the $\Theta(n^3)$ constraint count makes it prohibitive for large $n$.
Practitioners therefore resort to combinatorial heuristics such as PIVOT \cite{AilonCharikarNewman2008}, KwikCluster, and distributed methods \cite{BehnezhadCharikar2022}, which scale well but forfeit the LP's theoretical edge.

A natural middle ground is \emph{sparsification}: retain only a subset of edges or constraints, solve a smaller problem, and hope the approximation guarantee degrades gracefully.
Sparsification is well understood for cuts \cite{BenczurKarger2015}, spectra \cite{SpielmanTeng2011}, and clustering objectives like $k$-means \cite{FeldmanLangberg2011}, but for correlation clustering the picture is surprisingly blank.
Streaming \cite{Fichtenberger2021, BehnezhadCharikarMaTan2023}, distributed \cite{CohenAddadLattanzi2021, GuoMitrovicVassilvitskii2021}, sublinear \cite{AssadiWang2022, CaoCohenAddad2024}, sketching \cite{GraphSketches2025}, and query-efficient \cite{MiyauchiEtAl2024} approaches address scalability from various angles, yet none formally characterizes the fundamental trade-off: \emph{how many edges must an algorithm observe to preserve LP-based approximation guarantees, and when does sparsification provably fail?}

We answer this question by establishing a \textbf{structural dichotomy}.
When edge weights satisfy the triangle inequality (the \emph{pseudometric} condition), the metric structure propagates information across unobserved edges, and aggressive sparsification is possible.
Without this structure, each edge carries independent information, and even modest sparsification destroys approximability.

\subsection{Contributions}

\begin{enumerate}[leftmargin=2em]
\item \textbf{Additive edge coreset via VC dimension (Theorem~\ref{thm:coreset}).}
For any weighted CC instance on $n$ vertices with total weight $W$, we prove $\VC(\mathcal{H}_G) = n - 1$ exactly and use this to construct \emph{additive $\varepsilon$-coresets} of size $\tilde{O}(n / \varepsilon^2)$: the cost of every clustering is preserved within additive error $\varepsilon W$.

\item \textbf{Relative coreset above a threshold (Theorem~\ref{thm:sensitivity-coreset}).}
By reducing to the additive coreset, we show that weight-proportional sampling also yields a \emph{relative $\varepsilon$-coreset} for all clusterings whose cost exceeds a user-chosen threshold $\tau$, with sample complexity $\tilde{O}(d \cdot W^2 / (\varepsilon^2 \tau^2))$.

\item \textbf{Constraint sparsification via complementary slackness (Theorem~\ref{thm:constraint-sparsify}).}
At an optimal LP vertex solution, at most $\binom{n}{2}$ triangle inequality constraints are linearly independent among the active set.
We give a correct cutting-plane algorithm that identifies the active constraints (Lemma~\ref{lem:cutting-plane}).

\item \textbf{LP-PIVOT on sparse instances (Theorems~\ref{thm:sparse-pivot} and~\ref{thm:robust-sparse-pivot}).}
We design \textsc{Sparse-LP-Pivot}, a variant of LP-PIVOT using triangle-inequality imputation for missing edges.
We identify a sharp witness-formation threshold: $\tilde{\Theta}(n^{3/2})$ samples are necessary and sufficient for triangle-based imputation to have witnesses (Lemma~\ref{lem:witness-density}).
Under a good-witness condition, we recover a $(\frac{10}{3}+\varepsilon)$-approximation; without it, we give a robust bound parameterized by an empirically measurable imputation-quality statistic $\overline{\Gamma}_w$ (Definition~\ref{def:gamma-bar}).
We also characterize witness quality for tree-metric LP solutions (Proposition~\ref{prop:tree-metric}).

\item \textbf{Lower bound for general weights (Theorem~\ref{thm:lower-bound}).}
Without the pseudometric assumption, any algorithm making $o(n)$ edge queries cannot achieve a constant-factor approximation.
We prove this via Yao's minimax principle on a ``hidden heavy negative clique'' construction.
\end{enumerate}

Taken together, these results reveal that the pseudometric condition, already known to be necessary for constant-factor approximability in weighted CC \cite{CharikarGao2024}, is in fact a \emph{stability condition}: it controls how gracefully LP-based algorithms degrade under incomplete information.

\subsection{Related Work}

CC was introduced by \cite{BansalBlumChawla2004} with a constant-factor approximation.
Subsequent LP-based work achieved ratios of $3$ \cite{AilonCharikarNewman2008}, $2.06$ \cite{ChawlaMakarychev2015}, and $1.485 + \varepsilon$ \cite{CohenAddadEtAl2024stoc, CaoCohenAddad2024}.
Pseudometric-weighted CC was studied by \cite{CharikarGao2024}; chromatic CC by \cite{Bonchi2012, XiuHan2022, FanLeeLee2025}; and dynamic CC by \cite{CohenAddadLattanziMaggiori2024}.
The scalability approaches cited in the introduction \cite{Fichtenberger2021, BehnezhadCharikarMaTan2023, GuoMitrovicVassilvitskii2021, CohenAddadLattanzi2021, AssadiWang2022, GraphSketches2025, MiyauchiEtAl2024} address computational efficiency but do not characterize the information-theoretic cost of sparsification.
On the graph sparsification side, spectral sparsifiers \cite{SpielmanTeng2011}, cut sparsifiers \cite{BenczurKarger2015}, and coresets for geometric clustering \cite{FeldmanLangberg2011} are well developed, but to our knowledge no prior work provides formal edge-sparsification guarantees for correlation clustering.

\section{Preliminaries}\label{sec:prelim}

We begin by establishing notation and recalling the LP relaxation for correlation clustering.
Throughout, we use $\tilde{O}(\cdot)$ and $\tilde{\Omega}(\cdot)$ to suppress polylogarithmic factors in $n$, $1/\varepsilon$, and $1/\delta$.

Let $G = (V, E^+ \uplus E^-)$ be a complete signed graph on $n = |V|$ vertices with edge weights $w : \binom{V}{2} \to \R_{\geq 0}$.
A \emph{clustering} $\mathcal{C} = \{C_1, \ldots, C_k\}$ is a partition of $V$.
For a clustering $\mathcal{C}$, define the indicator $\sigma_{\mathcal{C}}(u,v) = 0$ if $u,v$ are in the same cluster and $\sigma_{\mathcal{C}}(u,v) = 1$ otherwise.
The \emph{cost} of $\mathcal{C}$ is:
\begin{equation}\label{eq:cc-cost}
\obj(\mathcal{C}) = \sum_{uv \in E^+} w_{uv} \cdot \sigma_{\mathcal{C}}(u,v) + \sum_{uv \in E^-} w_{uv} \cdot (1 - \sigma_{\mathcal{C}}(u,v)).
\end{equation}

\begin{definition}[Pseudometric weights]\label{def:pseudometric}
Edge weights $w$ form a \emph{pseudometric} if $w_{uv} + w_{vw} \geq w_{uw}$ for all $u,v,w \in V$.
\end{definition}

The standard LP relaxation replaces $\sigma_{\mathcal{C}}(u,v) \in \{0,1\}$ with $x_{uv} \in [0,1]$:
\begin{align}
\text{minimize}\quad & \sum_{uv \in E^+} w_{uv} \cdot x_{uv} + \sum_{uv \in E^-} w_{uv} \cdot (1 - x_{uv}) \tag{CC-LP} \label{eq:cc-lp} \\
\text{subject to}\quad & x_{uv} \leq x_{uw} + x_{vw} \quad \forall \text{ distinct } u,v,w \in V, \notag \\
& x_{uv} \in [0,1] \quad \forall \{u,v\} \subseteq V. \notag
\end{align}

Let $x^*$ denote an optimal LP solution with value $\LP^* = \LP(x^*)$.
We denote by $\OPT$ the optimal integral clustering cost.
The LP has $\binom{n}{2}$ variables and $3\binom{n}{3}$ triangle inequality constraints (three per unordered triple of distinct vertices, one for each side).

The LP-PIVOT algorithm \cite{AilonCharikarNewman2008, ChawlaMakarychev2015} picks a random pivot $v \in V$, assigns each $u \in V \setminus \{v\}$ to $v$'s cluster with probability $1 - f(x^*_{uv})$ where $f$ is a rounding function, and recurses on the remaining vertices.
For the specific rounding functions of \cite{FanLeeLee2025}, LP-PIVOT achieves $\E[\obj(\mathcal{C})] \leq \frac{10}{3} \, \LP(x)$ with Lipschitz constant $L = 2$ (Lemma~\ref{lem:baseline-pivot}; see Appendix~\ref{app:prelim} for the full statement and VC dimension background).

\section{Edge Coresets for Correlation Clustering}\label{sec:coresets}

Our first set of results concerns how many edges must be sampled to approximately preserve the cost of every clustering.
We formalize this via the notion of an \emph{edge coreset}.

\begin{definition}[Additive edge coreset]\label{def:coreset}
An \emph{additive $\varepsilon$-edge coreset} for a CC instance $(V, E^+, E^-, w)$ with total weight $W = \sum_e w_e$ is a weighted subgraph $H = (V, E_H, w_H)$ such that for every clustering $\mathcal{C}$ of $V$:
\[
|\obj_H(\mathcal{C}) - \obj_G(\mathcal{C})| \leq \varepsilon W.
\]
\end{definition}

The key step is bounding the VC dimension of the disagreement hypothesis class
$\mathcal{H}_G = \{h_{\mathcal{C}} : E \to \{0,1\} \mid \mathcal{C} \text{ is a partition of } V\}$
where $h_{\mathcal{C}}(uv) = \mathbf{1}[uv \text{ is a disagreement under } \mathcal{C}]$.

\begin{theorem}[VC dimension of clustering disagreements]\label{thm:vc-dim}
For any signed complete graph $G$ on $n \geq 3$ vertices,
$\VC(\mathcal{H}_G) = n - 1$.
\end{theorem}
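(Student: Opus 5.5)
The plan is to strip away the signs first and then argue about forests. For a fixed clustering $\mathcal{C}$, the disagreement indicator is $h_{\mathcal{C}}(uv) = \sigma_{\mathcal{C}}(uv)$ if $uv \in E^+$ and $h_{\mathcal{C}}(uv) = 1-\sigma_{\mathcal{C}}(uv)$ if $uv \in E^-$. So $h_{\mathcal{C}} = \sigma_{\mathcal{C}} \oplus s$, where $s$ is the fixed sign pattern with $s(uv) = 1$ iff $uv \in E^-$.

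XOR with a fixed vector is a bijection on $\{0,1\}^S$ for every edge set $S$. Hence $S$ is shattered by $\mathcal{H}_G$ if and only if it is shattered by the \emph{separation class} $\{\sigma_{\mathcal{C}}\}$. In particular the VC dimension does not depend on the signs, and it suffices to show that the largest edge set shattered by separation indicators has size exactly $n-1$.

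\textbf{Upper bound.} I will show that any shattered set $S$ is acyclic, viewed as a subgraph of $K_n$, and therefore $|S| \le n-1$.
\begin{itemize}[leftmargin=2em]
\item Suppose $S$ contains a cycle $v_1 v_2 \cdots v_k v_1$.
\item Consider the labeling that sets $\sigma = 1$ on the edge $v_k v_1$ and $\sigma = 0$ on every other cycle edge; edges of $S$ off the cycle may be labeled arbitrarily.
\item "Same cluster" is an equivalence relation. So $\sigma(v_i v_{i+1}) = 0$ for $i<k$ forces $v_1$ and $v_k$ into the same cluster, giving $\sigma(v_k v_1) = 0$, a contradiction.
\end{itemize}
This pattern is therefore not realizable, so $S$ is not shattered. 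A forest on $n$ vertices has at most $n-1$ edges.

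\textbf{Lower bound.} I will exhibit a spanning tree $T$ with $n-1$ edges (for concreteness, a star or a path) and show that it is shattered.
\begin{itemize}[leftmargin=2em]
\item Given any target labeling $b \in \{0,1\}^{E(T)}$, let $\mathcal{C}$ be the partition of $V$ into connected components of the subgraph $T_0 = \{e \in T : b(e) = 0\}$.
\item Edges with $b(e) = 0$ lie inside a component, so $\sigma_{\mathcal{C}}(e) = 0$ as required.
\item For a tree edge $e = uv$ with $b(e) = 1$: if $u$ and $v$ lay in the same component of $T_0$, the $T_0$-path between them together with $e$ would form a cycle in $T$, which is impossible. Hence $\sigma_{\mathcal{C}}(e) = 1$.
\end{itemize}
Every pattern on $T$ is thus realized, and by the XOR reduction $\VC(\mathcal{H}_G) \ge n-1$.

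\textbf{Expected difficulty.} There is no serious obstacle here. The only step needing care is the reduction from $\mathcal{H}_G$ to the separation class: it must be stated for each fixed candidate set $S$, so that the XOR with $s|_S$ is a bijection of the label cube. The hypothesis $n \ge 3$ plays no essential role in this argument. It only ensures that cycles, and hence nontrivial triangle structure, exist, and I will remark that the same argument gives $n-1$ for all $n \ge 2$.
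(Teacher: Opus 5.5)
Your proof is correct and follows essentially the same route as the paper: the upper bound finds a cycle in any $n$-edge set and uses transitivity of ``same cluster'' to forbid the pattern with exactly one separated cycle edge (signs handled by XOR), and the lower bound shatters an $(n-1)$-edge spanning structure. The differences are cosmetic: you apply the XOR reduction once up front and shatter an arbitrary spanning tree via the components of the $0$-labeled edges, whereas the paper shatters the star at $v_1$ and handles the edge signs case by case.
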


\begin{proof}[Proof sketch]
For the lower bound, the $n-1$ star edges incident to a fixed vertex $v_1$ are shattered: for any desired disagreement pattern, placing each $v_i$ with $v_1$ or as a singleton realizes it.
For the upper bound, any set of $n$ edges on $n$ vertices contains a cycle; the transitivity of ``same cluster'' forces forbidden disagreement patterns on cycle edges, preventing shattering.
The full proof is in Appendix~\ref{app:vc-proof}.
\end{proof}

\begin{theorem}[Additive edge coreset for CC]\label{thm:coreset}
Let $G = (V, E^+ \uplus E^-, w)$ be a weighted CC instance with $n$ vertices and total weight $W = \sum_e w_e$.
Let $H$ be obtained by sampling $m$ edges independently with probability $p(e) = w_e / W$, assigning weight $W / m$ to each sampled edge (summing weights over repeats).

If $m = \Omega\left(\frac{(n-1) \log(1/\varepsilon) + \log(1/\delta)}{\varepsilon^2}\right) = \tilde{O}(n/\varepsilon^2)$, then $H$ is an additive $\varepsilon$-edge coreset with probability $\geq 1 - \delta$.
\end{theorem}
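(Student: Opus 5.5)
The plan is to cast the coreset property as a uniform convergence statement for the class $\mathcal{H}_G$ under the distribution $p(e)=w_e/W$, and then apply the standard VC-based $\varepsilon$-approximation theorem using Theorem~\ref{thm:vc-dim}.

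First we rewrite both objectives as expectations. For any clustering $\mathcal{C}$, the definition of $h_{\mathcal{C}}$ together with \eqref{eq:cc-cost} gives
\[
\obj_G(\mathcal{C})=\sum_e w_e h_{\mathcal{C}}(e)=W\cdot \E_{e\sim p}[h_{\mathcal{C}}(e)].
\]
For the sample, let $e_1,\dots,e_m$ be the i.i.d.\ draws. Because each draw receives weight $W/m$, and repeated draws have their weights summed,
\[
\obj_H(\mathcal{C})=\frac{W}{m}\sum_{i=1}^m h_{\mathcal{C}}(e_i).
\]
This is $W$ times the empirical mean of $h_{\mathcal{C}}$. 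The coreset condition $|\obj_H-\obj_G|\le\varepsilon W$ for all $\mathcal{C}$ is therefore exactly the condition
\[
\sup_{h\in\mathcal{H}_G}\bigl|\hat{\E}_m[h]-\E_p[h]\bigr|\le\varepsilon,
\]
i.e.\ that the sample is an $\varepsilon$-approximation for the range space $(E,\mathcal{H}_G)$ under $p$. Here $E$ is the set of all $\binom{n}{2}$ pairs, and the sign of each pair is baked into $h$.

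Second, we invoke the classical $\varepsilon$-approximation theorem of Vapnik--Chervonenkis, in the Talagrand/Li--Long--Srinivasan form. It says that for a class of VC dimension $d$, an i.i.d.\ sample of size $m=O\bigl((d+\log(1/\delta))/\varepsilon^2\bigr)$ is an $\varepsilon$-approximation with probability at least $1-\delta$. The weaker classical bound, with an extra $\log(1/\varepsilon)$ factor on $d$, matches the stated form. The ingredient it needs is $d=\VC(\mathcal{H}_G)=n-1$, which we take from Theorem~\ref{thm:vc-dim} (for $n\ge3$; small $n$ is trivial). Substituting gives the stated $m$, and $\tilde O(n/\varepsilon^2)$ follows by absorbing the logarithmic factors.

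The main obstacle is the care needed in applying the theorem. The distribution is over pairs, not over $\{0,1\}$-labelled points, and the hypothesis class is indexed by partitions. We must confirm that $\mathcal{H}_G$ is a fixed family of subsets of the finite domain $E$ (the disagreement sets), which holds because $G$ is fixed, so that the VC theorem applies verbatim. We must also confirm that the with-replacement sampling and the summation of weights over repeats leave the empirical-mean identity intact. If a self-contained argument is preferred, the fallback is symmetrization plus Sauer--Shelah. Sauer--Shelah bounds the number of distinct restrictions of $\mathcal{H}_G$ to $2m$ sampled edges by $(2em/(n-1))^{n-1}$. Hoeffding's inequality plus a union bound over these restrictions then yields failure probability $\le 4(2em/d)^{d}e^{-\varepsilon^2 m/8}$, which is below $\delta$ for $m$ as stated.
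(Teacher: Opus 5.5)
Your proposal is correct and matches the paper's proof: you write $\obj_G(\mathcal{C}) = W\,\E_{e\sim p}[h_{\mathcal{C}}(e)]$ and $\obj_H(\mathcal{C})$ as $W$ times the empirical mean of $h_{\mathcal{C}}$, apply the VC uniform convergence theorem (Theorem~\ref{thm:vc-convergence}) with $d=\VC(\mathcal{H}_G)=n-1$ from Theorem~\ref{thm:vc-dim}, and multiply through by $W$. Your symmetrization-plus-Sauer--Shelah fallback is simply a self-contained proof of the cited convergence theorem; it is sound but not needed.
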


The proof applies standard VC uniform convergence to $\mathcal{H}_G$; see Appendix~\ref{app:coreset-proof}.
For converting the additive guarantee to an approximation bound: running an $\alpha$-approximation on $H$ yields cost $\leq \alpha \cdot \OPT + (\alpha+1)\varepsilon W$ on $G$, meaningful when $\OPT = \Omega(\varepsilon W)$.
We also obtain relative $\varepsilon$-coresets above a threshold $\tau$ via sensitivity sampling (Theorem~\ref{thm:sensitivity-coreset}; see Appendix~\ref{app:sensitivity}).

\section{Constraint Sparsification}\label{sec:constraint}

Having addressed edge sparsification, we now turn to a complementary question: how many of the $\Theta(n^3)$ triangle inequality constraints are actually needed to define the LP optimum?

\begin{theorem}[Active triangle inequalities at a vertex]\label{thm:constraint-sparsify}
Let $x^*$ be an optimal \emph{vertex} solution to \eqref{eq:cc-lp}, and let $d=\binom{n}{2}$.
Then among the triangle inequalities that are tight at $x^*$, there exists a subset of at most $d$ that is linearly independent.
In particular, there exists a set of at most $d$ tight constraints (triangle inequalities plus bounds) that certifies $x^*$ as a vertex of the full feasible polytope.
\end{theorem}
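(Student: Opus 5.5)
The proof is elementary polyhedral linear algebra; we apply it to the feasible region of \eqref{eq:cc-lp}, a polytope $P \subseteq \R^d$ with $d = \binom{n}{2}$. Write every constraint in the form $a_i^\top x \le b_i$. There are two families:
\begin{itemize}[leftmargin=2em]
\item the triangle inequalities, whose coefficient vectors have entries in $\{0,\pm 1\}$ on three coordinates;
\item the bound constraints $-x_{uv} \le 0$ and $x_{uv} \le 1$, whose coefficient vectors are $\pm e_{uv}$.
\end{itemize}
We use the standard characterization: a feasible point $x^*$ is a vertex of $P$ if and only if the coefficient vectors of the constraints tight at $x^*$ span $\R^d$, i.e.\ have rank $d$.

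\textbf{First assertion.} Let $T$ be the set of tight triangle inequalities at $x^*$. Their coefficient vectors live in $\R^d$, so the subspace they span has dimension $r \le d$. Any maximal linearly independent subfamily of $T$ has exactly $r$ elements, and we can find one greedily: scan $T$ and keep a vector whenever it is not in the span of those already kept. This subset has size at most $d$ and spans the same subspace as all of $T$. Thus the first claim is just a dimension count and needs no optimality or vertex hypothesis.

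\textbf{Certificate.} Since $x^*$ is a vertex, the full tight set (triangles in $T$ together with tight bounds $B$) has rank exactly $d$. Extend the independent subset $S \subseteq T$ to a basis of $\R^d$ using vectors from $B$. This is possible because $S \cup B$ spans the same space as $T \cup B$, namely all of $\R^d$, and any independent set can be completed to a basis from a spanning set containing it. The resulting family has exactly $d$ linearly independent tight constraints. The linear system formed by setting these $d$ constraints to equality therefore has the unique solution $x^*$, which certifies that $x^*$ is a vertex of the full polytope. Crucially, we never need to consider the other $3\binom{n}{3} - |S|$ triangle inequalities.

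\textbf{Optimality.} To justify the ``certifies'' language with respect to optimality, we also note that complementary slackness and Carathéodory's theorem applied to the dual let us choose an optimal dual solution supported on a linearly independent subset of the tight constraints. Hence the LP restricted to these at most $d$ constraints, plus the bounds, has $x^*$ as an optimum. If we wanted this slightly stronger statement, we would take a basic optimal dual solution of the standard-form LP.

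\textbf{Main subtlety.} The only delicate point is interpretive. The theorem does not claim that only $d$ triangle inequalities are tight; degenerate vertices can have many more. It claims only that a linearly independent certifying subset of size at most $d$ exists. We will state this distinction explicitly, and otherwise the argument is routine.
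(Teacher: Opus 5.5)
Your proposal is correct and takes essentially the same route as the paper. Both arguments rest on the fact that a vertex is a feasible point whose active constraints have rank $d=\binom{n}{2}$. Hence any linearly independent family of tight triangle inequalities has at most $d$ members and sits inside a size-$d$ certificate. The paper obtains this by restricting an arbitrary size-$d$ certificate to its triangle constraints; you do it by explicitly completing a maximal independent set of tight triangles with tight bounds.

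Your additional Carath\'eodory argument is also valid. It picks an optimal dual supported on a linearly independent set of tight constraints, which keeps $x^*$ optimal for the restricted LP. This is a strengthening that the paper only alludes to through complementary slackness in its overview.
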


The proof follows directly from LP theory (a vertex in $\R^d$ is defined by $d$ linearly independent active constraints); see Appendix~\ref{app:constraint-proof}.

\begin{lemma}[Cutting-plane correctness]\label{lem:cutting-plane}
Let $\widehat{\mathcal{T}}$ be any subset of triangle constraints.
Let $\hat{x}$ be the optimum of CC-LP restricted to $\widehat{\mathcal{T}}$ (plus all bound constraints).
If $\hat{x}$ violates some omitted triangle inequality, adding any violated inequality and re-solving refines the feasible region, and the process terminates in finitely many steps with the exact optimum $x^*$.
\end{lemma}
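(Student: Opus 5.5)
The argument is a standard cutting-plane argument built on three facts: restricted LPs are relaxations of CC-LP, there are only finitely many constraints to add, and feasibility of a relaxation optimum implies optimality. Let $\mathcal{T}$ denote the full set of $3\binom{n}{3}$ triangle inequalities. For any $\widehat{\mathcal{T}}\subseteq\mathcal{T}$, let $P(\widehat{\mathcal{T}})$ be the polytope cut out by $\widehat{\mathcal{T}}$ together with all bounds $0\le x_{uv}\le 1$.

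\textbf{Monotonicity and existence.} First observe that $P(\widehat{\mathcal{T}})\supseteq P(\mathcal{T})$. Each $P(\widehat{\mathcal{T}})$ is a nonempty bounded polytope: it contains $x\equiv 0$ and lies in $[0,1]^d$. Hence the restricted LP attains its optimum, and its value satisfies $\LP(\hat x)\le \LP^*$, because it minimizes the same linear objective over a superset.

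\textbf{Strict refinement and termination.} Suppose $\hat x$ violates some $t\in\mathcal{T}\setminus\widehat{\mathcal{T}}$. Then $\hat x\notin P(\widehat{\mathcal{T}}\cup\{t\})$, so adding $t$ strictly shrinks the feasible region: it removes $\hat x$ while keeping $P(\mathcal{T})$. The chosen $t$ was omitted before, so the constraint set grows by exactly one element at each iteration. Since $|\mathcal{T}|=3\binom{n}{3}$ is finite, the process stops after at most $3\binom{n}{3}-|\widehat{\mathcal{T}}|$ rounds. In the worst case it reaches $\widehat{\mathcal{T}}=\mathcal{T}$, where nothing can be violated. Note that this bound does not depend on which violated inequality is chosen or on the solver used.

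\textbf{Correctness at termination.} When the process stops, the current $\hat x$ satisfies every triangle inequality and every bound, so $\hat x\in P(\mathcal{T})$ and $\LP(\hat x)\ge\LP^*$. Combined with the relaxation bound, this gives $\LP(\hat x)=\LP^*$, so $\hat x$ is an optimal solution of CC-LP.

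\textbf{Main obstacle.} The delicate point is the phrase ``the exact optimum $x^*$'': the LP optimum need not be unique. I would handle this in one of two ways. The first is to interpret $x^*$ as \emph{an} optimal solution, which is what the argument above establishes. The second is to fix a tie-breaking rule so that the output coincides with a prescribed $x^*$, for example a lexicographic objective perturbation or a solver that returns vertices. With the vertex convention, Theorem~\ref{thm:constraint-sparsify} also tells us that the final $\widehat{\mathcal{T}}$ contains enough tight constraints to certify $\hat x$ as a vertex of the full polytope. I would state the lemma's conclusion in the first form, as optimality of the returned solution, and note the tie-breaking refinement afterward.
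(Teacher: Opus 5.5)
Your proposal is correct and follows essentially the same route as the paper. The restricted LP is a relaxation, so $\LP(\hat x)\le \LP^*$; termination follows from the finiteness of the $3\binom{n}{3}$ triangle inequalities; and full feasibility of $\hat x$ at termination forces $\LP(\hat x)=\LP^*$. Your added details make explicit what the paper leaves implicit: nonemptiness via $x\equiv 0$, the observation that any violated inequality must lie outside $\widehat{\mathcal{T}}$ so the constraint set strictly grows, and reading ``the exact optimum'' as ``an optimal solution''.

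Two small inaccuracies appear only in your side remark. First, the fact that the final $\widehat{\mathcal{T}}$ plus bounds certifies a vertex output $\hat x$ as a vertex of the full polytope comes from $\hat x$ being a vertex of the restricted polytope. It does not come from Theorem~\ref{thm:constraint-sparsify}, which only guarantees tight constraints somewhere in the full system. Second, a vertex-returning solver alone does not make the output coincide with a prescribed $x^*$.
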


The proof and the full cutting-plane algorithm (\textsc{Sparse-LP-CC}, Algorithm~\ref{alg:sparse-lp}) are in Appendix~\ref{app:cutting-plane}.
Empirically, the cutting-plane LP uses $< 5\%$ of all triangle constraints (Section~\ref{sec:experiments}).

\section{LP-PIVOT on Sparse Instances}\label{sec:sparse-pivot}

We now address the central algorithmic question: can LP-PIVOT achieve near-optimal approximation when only a sparse subset of LP marginals is available?
The key idea is to \emph{impute} missing LP values using the triangle inequality, leveraging observed edges as witnesses.
We assume edge signs and weights are globally known, but LP marginals are \emph{sparsified}: the rounding algorithm observes a random subset $S \subseteq \binom{V}{2}$ of $m$ edges and may read LP marginals $\{x_{uv} : uv \in S\}$ only (see Appendix~\ref{app:access-model} for the formal access model definitions).

\begin{definition}[Triangle imputation]\label{def:impute}
For an unobserved edge $uv$, and a witness $w$ with $uw, vw \in S$, define the \emph{witness interval}:
\[
L_w(uv) = |x_{uw} - x_{vw}|, \qquad U_w(uv) = \min\{x_{uw} + x_{vw},\, 1\},
\]
and the \emph{witness midpoint} $m_w(uv) = (L_w(uv) + U_w(uv))/2$.
Among all observed witnesses, let $w^* = \arg\min_{w} (U_w(uv) - L_w(uv))$ be the witness with the \emph{narrowest} interval.
The \emph{imputed LP value} is:
\begin{equation}\label{eq:impute}
\tilde{x}_{uv} = m_{w^*}(uv) = \frac{L_{w^*}(uv) + U_{w^*}(uv)}{2}.
\end{equation}
If no witnesses exist for $uv$, set $\tilde{x}_{uv} = 1/2$ as a default.
Ties in $\arg\min$ are broken arbitrarily.
\end{definition}

Any feasible $x$ satisfies $x_{uv} \in [L_w, U_w]$ by the triangle inequality, so the midpoint approximates $x_{uv}$ within half the interval width (Lemma~\ref{lem:imputation}; Appendix~\ref{app:imputation}).

\subsection{Witness Density Threshold}

\begin{lemma}[Witness density threshold]\label{lem:witness-density}
Let $S \sim G(n,p)$, i.e., each edge is included independently with probability $p$.
If $p \geq c \sqrt{\log n / n}$ for a sufficiently large constant $c$, then with probability $\geq 1 - n^{-2}$, every pair $(u,v)$ has at least one witness.
Conversely, if $p = o(1/\sqrt{n})$ (equivalently $\E[|S|] = o(n^{3/2})$), most pairs have zero witnesses.
\end{lemma}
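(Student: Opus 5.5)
The plan is a first-moment/Chernoff argument for the upper direction and a first-moment (Markov) argument for the converse, based on the independence structure of $G(n,p)$.

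\textbf{Upper direction.} Fix a pair $(u,v)$. For each of the $n-2$ other vertices $w$, witness-hood of $w$ is the event $\{uw \in S\} \cap \{vw \in S\}$. These events involve disjoint edge pairs for distinct $w$, so they are mutually independent, each with probability $p^2$. Two consequences follow:
\[
\Pr[\text{$(u,v)$ has no witness}] = (1-p^2)^{n-2} \le \exp\bigl(-p^2(n-2)\bigr).
\]
With $p \ge c\sqrt{\log n/n}$ we get $p^2(n-2) \ge c^2 (n-2)\log n / n \ge (c^2/2)\log n$ for $n \ge 4$. So this probability is at most $n^{-c^2/2}$. A union bound over the $\binom{n}{2} \le n^2/2$ pairs then bounds the failure probability by $n^{2-c^2/2}$. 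This is at most $n^{-2}$ once $c \ge \sqrt{8}$; any small $n$ can be absorbed into the constant. The edge $uv$ itself is irrelevant here, since the statement asks about witnesses only and its membership in $S$ is independent of the events above. If one wants more, e.g.\ $\Omega(\log n)$ witnesses per pair for later robustness lemmas, a Chernoff bound on the $\mathrm{Bin}(n-2,p^2)$ witness count gives it with the same union bound.

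\textbf{Converse.} For a fixed pair, the expected number of witnesses is $(n-2)p^2$. If $p = o(1/\sqrt n)$, this is $o(1)$, so by Markov's inequality
\[
\Pr[\text{$(u,v)$ has a witness}] \le (n-2)p^2 = o(1).
\]
By linearity of expectation, the expected fraction of pairs with at least one witness is $o(1)$. A second application of Markov's inequality to that fraction shows that with probability $1-o(1)$, a $1-o(1)$ fraction of pairs have zero witnesses. This is the precise meaning I would give to ``most pairs.''

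Finally, I would record the stated equivalence $\E[|S|] = p\binom{n}{2}$. With $p = \Theta(n^{-1/2})$ this is $\Theta(n^{3/2})$, so the two regimes sit at $\tilde\Theta(n^{3/2})$ observed edges, with the upper direction losing only a $\sqrt{\log n}$ factor.

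The only mild obstacle is getting the independence claim right: the witness events for different $w$ are independent precisely because they involve disjoint edge sets. The remaining steps are routine bookkeeping of constants.
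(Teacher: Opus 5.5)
Your proposal is correct and matches the paper's proof essentially step for step. The paper bounds the no-witness probability by $(1-p^2)^{n-2}\le e^{-(n-2)p^2}$, takes $p^2\ge 6\log n/n$ to get $n^{-4}$ per pair, union-bounds over all pairs, and handles the converse by noting that the expected number of witnesses per pair, $(n-2)p^2$, is $o(1)$. Your second application of Markov's inequality turns the paper's informal ``most pairs'' conclusion, which it justifies only via $\E[X_{uv}]\to 0$, into a precise high-probability statement.
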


This establishes $\tilde{\Theta}(n^{3/2})$ as a sharp phase transition for triangle-based imputation.
The proof is in Appendix~\ref{app:witness-proof}.

\subsection{Good-Witness Condition and Approximation Guarantees}

\begin{definition}[Good-witness fraction]\label{def:good-witness}
For a feasible LP solution $x$ and a parameter $\gamma > 0$, a vertex $w$ is a \emph{$\gamma$-good witness} for pair $(u,v)$ if the witness interval has width $\leq \gamma$.
The instance satisfies the \emph{$(\rho, \gamma)$-good-witness condition} if for every pair $(u,v)$, at least a $\rho$-fraction of vertices $w \in V \setminus \{u,v\}$ are $\gamma$-good witnesses.
\end{definition}

\begin{algorithm}[t]
\caption{\textsc{Sparse-LP-Pivot}}
\label{alg:sparse-pivot}
\begin{algorithmic}[1]
\Require Vertex set $V$, observed edges $S$ with signs/weights and LP marginals $\{x_{uv}\}_{uv \in S}$, rounding functions $f^+, f^-$
\Ensure Clustering $\mathcal{C}$ of $V$
\State Pick pivot $v \in V$ uniformly at random
\State $C \leftarrow \{v\}$
\For{$u \in V \setminus \{v\}$}
    \If{$vu \in S$} \Comment{Observed: sign/weight from instance, LP marginal from $S$}
        \State $\hat{x}_{vu} \leftarrow x_{vu}$
    \Else \Comment{Impute LP marginal; sign/weight from instance}
        \State $\hat{x}_{vu} \leftarrow$ triangle imputation (Definition~\ref{def:impute})
    \EndIf
    \State $p_{vu} \leftarrow f^{\text{sign}_{vu}}(\hat{x}_{vu})$ \Comment{sign globally known}
    \State With probability $1 - p_{vu}$: $C \leftarrow C \cup \{u\}$
\EndFor
\State \Return $\{C\} \cup \textsc{Sparse-LP-Pivot}(V \setminus C, S|_{V \setminus C})$
\end{algorithmic}
\end{algorithm}

\begin{theorem}[Sparse LP-PIVOT under good-witness condition]\label{thm:sparse-pivot}
Let $G$ be a weighted CC instance on $n$ vertices, and let $x$ be any feasible solution to \eqref{eq:cc-lp} satisfying the $(\rho, \gamma)$-good-witness condition (Definition~\ref{def:good-witness}).
If $|S| = \Omega(n^{3/2} \sqrt{\log n / \rho})$ edges are sampled uniformly, then \textsc{Sparse-LP-Pivot} using $x$ with $L$-Lipschitz rounding functions having baseline ratio $\alpha$ (Lemma~\ref{lem:baseline-pivot}) achieves:
\[
\E[\obj(\mathcal{C})] \leq \alpha\,\LP(x) + O(L\gamma) \cdot W.
\]
Instantiating with $\alpha = 10/3$ and $L = 2$ and setting $x = x^*$ gives $\E[\obj(\mathcal{C})] \leq \frac{10}{3}\,\OPT + O(\gamma) \cdot W$.
\end{theorem}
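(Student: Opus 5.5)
The proof is a coupling argument. I compare \textsc{Sparse-LP-Pivot} run on the imputed marginals $\hat{x}$ with idealized LP-PIVOT run on the true marginals $x$, and bound the difference by the per-edge imputation error propagated through the $L$-Lipschitz rounding functions. There are three steps: show that with high probability every unobserved pair has a $\gamma$-good witness in $S$, deduce $|\hat{x}_{uv}-x_{uv}|\le \gamma/2$ for every pair, and then charge the cost difference to edges.

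\textbf{Step 1 (good witnesses are observed).} Sampling $|S|=m$ edges uniformly is close to $G(n,p)$ with $p\approx 2m/n^2=\Omega(\sqrt{\log n/(\rho n)})$; I will either Poissonize or use negative association to pass between the two models. Fix a pair $(u,v)$. By the $(\rho,\gamma)$-good-witness condition there are at least $\rho(n-2)$ vertices $w$ whose interval has width at most $\gamma$. Each such $w$ is an observed witness when both $uw,vw\in S$, which happens with probability $p^2$. These events are independent across $w$ because they involve disjoint edge pairs. The chance that none is observed is therefore at most $(1-p^2)^{\rho(n-2)}\le \exp(-p^2\rho n/2)\le n^{-4}$ once the constant in $m$ is large enough, exactly as in Lemma~\ref{lem:witness-density} but restricted to the good witnesses. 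A union bound over the $\binom n2$ pairs gives failure probability at most $n^{-2}$. On the failure event I charge at most $W$, since every clustering costs at most $W$, and this contributes $n^{-2}W$, which is absorbed into $O(\gamma)W$ (or into a lower-order term, taking $\gamma\ge n^{-2}$ without loss of generality).

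\textbf{Step 2 (imputation accuracy).} On the success event, the narrowest observed interval has width at most $\gamma$, because a $\gamma$-good witness is among the candidates. By Lemma~\ref{lem:imputation}, feasibility of $x$ gives $x_{uv}\in[L_{w^*},U_{w^*}]$, so $|\tilde{x}_{uv}-x_{uv}|\le\gamma/2$. Observed edges have zero error. Hence the rounding probabilities satisfy $|f^{\pm}(\hat{x}_{uv})-f^{\pm}(x_{uv})|\le L\gamma/2$ for every pair.

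\textbf{Step 3 (propagating to cost).} I expect this step to be the main obstacle: errors at a pivot alter the remaining vertex set, so they cascade through the recursion. My plan is to couple the two runs with shared randomness. Both runs use the same pivot sequence and the same uniform threshold $U_{vu}$ for each pair, and a vertex joins the pivot's cluster iff $U_{vu}>p_{vu}$. The runs agree unless some pivot--vertex decision differs, which for pair $vu$ happens with probability at most $L\gamma/2$.

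The hard part is bounding the total cost charged to such divergences by $O(L\gamma)W$ rather than $O(L\gamma n)W$. I will not use a global coupling. Instead I will follow the standard triangle-based analysis of LP-PIVOT behind Lemma~\ref{lem:baseline-pivot}: write the expected cost as a sum over triples $(v,u,u')$ with pivot $v$ of the cost of edge $uu'$ times a polynomial in $f(x_{vu}),f(x_{vu'})$, and divide by the probability that the triple is the first to be hit. Each such term is Lipschitz in the two marginals with constant $O(L)$. Substituting $\hat{x}$ for $x$ therefore changes the charge on edge $uu'$ by $O(L\gamma)w_{uu'}$. For edges with pivot directly as an endpoint, the change is at most $L\gamma w_{vu}/2$ directly. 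Summing over edges, each edge weight is charged an averaged $O(L\gamma)$, giving $\E[\obj(\mathcal{C})]\le \alpha\LP(x)+O(L\gamma)W$.

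Finally, setting $x=x^*$ gives $\LP(x^*)\le\OPT$, and plugging in $\alpha=10/3$ and $L=2$ yields the instantiated bound. If the triple-charging argument breaks down, the fallback is to bound the per-edge divergence probability by $O(L\gamma)$ using the fact that the fate of $uu'$ is decided at the first pivot adjacent to either endpoint, together with the union bound over the two decisions at that pivot.
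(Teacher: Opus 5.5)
Your Steps 1 and 2 are sound. They coincide with the paper's Lemmas~\ref{lem:minwidth-imputation} and~\ref{lem:imputation}, and you are more careful than the paper about the sampling model and the failure event. The gap is in Step 3, at the sentence ``substituting $\hat{x}$ for $x$ therefore changes the charge on edge $uu'$ by $O(L\gamma)w_{uu'}$'', and the fallback has the same gap. The Lipschitz bound controls what happens to $uu'$ in \emph{one} pivot round. A pair with neither endpoint equal to the pivot is exposed in every round in which both endpoints survive, and it picks up an $O(L\gamma)w_{uu'}$ perturbation in each of those rounds. What your argument actually yields is an additive term $O(L\gamma)\sum_t \E[W(V_t)]$, where $W(V_t)$ is the weight inside the surviving set, and this can be $\Theta(n)$ times $W$. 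Dividing by the probability that the triple is hit does not help, because that normalized charge is not Lipschitz. For a positive pair whose endpoints never join an outside pivot under $x$, the cost and the hitting probability are both $0$ under $x$ and are of the same order under $\hat{x}$, so the per-hit charge jumps from $0$ to $\Theta(1)$. The fallback fails for the same reason: the fate of $uu'$ is not settled at one pivot, since every earlier pivot makes a join/decline decision on $u$ and on $u'$, and any of these can flip.

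This is not only a bookkeeping weakness. Take $V=A_1\cup\dots\cup A_k$ with $|A_i|=n/k$, positive edges inside parts and negative edges across. Set $x_{uv}=\gamma_0$ inside parts and $x_{uv}=1$ across. This $x$ is feasible and satisfies the $(\rho,\gamma)$-good-witness condition with $\gamma=2\gamma_0$ and $\rho\approx 1/k$. It imputes intra-part pairs exactly, and imputes every unobserved cross pair as $1-\gamma_0/2$, because the narrowest witness lies in one of the two parts. With rounding functions that have $f^+=0$ near $0$ and $f^-(y)=y$ (as in \cite{ChawlaMakarychev2015}), exact LP-PIVOT outputs $\{A_1,\dots,A_k\}$ at cost $0$. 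Under $\hat{x}$, however, each outside pivot steals each vertex with probability $\gamma_0/2$, apart from the $o(1)$ fraction of observed cross pairs. A part also faces $\Theta(k)$ outside pivots in expectation before its own pivot arrives. So a fixed intra-part edge is cut with probability $\Theta(\min\{1,\gamma k\})=\Theta(\min\{1,\gamma/\rho\})$, not $O(L\gamma)$. If the intra-part edges carry most of the weight, the excess is $\Theta(\min\{1,\gamma/\rho\})W$, while $\alpha\,\LP(x)=O(\gamma W)$. The missing ingredient is therefore control of how long a pair stays exposed across rounds, and the example shows the resulting bound must depend on $\rho$ or on more structure of the weights. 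The paper does not supply this either: its Steps~3--4 assert the same per-triple identity and sum it once with weight $1/n$, as if there were a single pivot round. A smaller point: Algorithm~\ref{alg:sparse-pivot} recurses on $S|_{V\setminus C}$, which discards witnesses already clustered. To use your Step~1 event you must impute all pairs once from the full sample $S$.
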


The proof (Appendix~\ref{app:sparse-pivot-proof}) extends the triple-based analysis of \cite{FanLeeLee2025}: witness coverage ensures imputation error $\leq \gamma/2$ per edge, the Lipschitz bound propagates this to rounding probabilities, and summing over triples yields the additive $O(L\gamma) W$ term.

For instances where the good-witness condition may fail, we give a robust bound requiring no structural assumptions:

\begin{definition}[Imputation quality statistic]\label{def:gamma-bar}
For a sample $S$ and feasible LP solution $x$, define the \emph{per-pair imputation width}:
\[
\Gamma(u,v) := \min_{w:\, uw,vw \in S} \big(U_w(uv) - L_w(uv)\big),
\]
with $\Gamma(u,v) = 1$ if $(u,v)$ has no witness.
The \emph{weighted average imputation width} is:
\[
\overline{\Gamma}_w := \frac{1}{W} \sum_{u < v} w_{uv} \, \Gamma(u,v).
\]
\end{definition}

\begin{theorem}[Robust Sparse LP-PIVOT]\label{thm:robust-sparse-pivot}
Let $G$ be a weighted CC instance on $n$ vertices, and let $x$ be any feasible solution to \eqref{eq:cc-lp}.
If $|S| = \Omega(n^{3/2}\sqrt{\log n})$ edges are sampled, then \textsc{Sparse-LP-Pivot} achieves:
\[
\E[\obj(\mathcal{C})] \leq \alpha\,\LP(x) + O(L) \cdot W \cdot \overline{\Gamma}_w.
\]
No assumption on $(\rho,\gamma)$ is needed.
When additionally the $(\rho,\gamma)$-good-witness condition holds, $\overline{\Gamma}_w \leq \gamma$ and this recovers Theorem~\ref{thm:sparse-pivot}.
\end{theorem}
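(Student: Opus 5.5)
The approach is a coupling with ordinary LP-PIVOT run on the true marginals $x$, followed by a charge of the difference to the imputation errors. Let $\hat{x}$ be the vector LP-PIVOT actually uses: $\hat{x}_{uv} = x_{uv}$ for $uv \in S$, and $\hat{x}_{uv} = \tilde{x}_{uv}$ otherwise. If $uv$ has a witness, Lemma~\ref{lem:imputation} gives $x_{uv} \in [L_{w^*}, U_{w^*}]$, because $x$ is feasible and therefore satisfies all triangle inequalities. Since $\tilde{x}_{uv}$ is the midpoint of that interval, $|\hat{x}_{uv} - x_{uv}| \leq \Gamma(u,v)/2$. If $uv$ has no witness, then $\tilde{x}_{uv} = 1/2$ and $|\hat{x}_{uv}-x_{uv}| \leq 1/2 = \Gamma(u,v)/2$. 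So the bound $|\hat{x}_{uv}-x_{uv}|\le \Gamma(u,v)/2$ holds for every pair with no assumption. The rounding functions are $L$-Lipschitz, so $|f^{\pm}(\hat{x}_{uv}) - f^{\pm}(x_{uv})| \leq L\,\Gamma(u,v)/2$.

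Next I couple the two runs. Both use the same pivot sequence, and for each pair $(v,u)$ they use a shared uniform variable $U_{vu}$: in each run, $u$ joins $v$'s cluster iff $U_{vu}$ exceeds that run's rounding probability. Before the first divergence the two runs have the same remaining vertex set, so as long as no pivot decision differs, the runs coincide. Call the pair $vu$ \emph{flipped} if $v$ is a pivot, $u$ is still unclustered when $v$ is chosen, and the two runs decide differently. Conditional on reaching that state, this has probability at most $L\Gamma(v,u)/2$.

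The cost I want to compare is the standard one for pivot algorithms: charge each edge's disagreement to the pivot step at which its first endpoint is clustered. The Fan--Lee--Lee triple analysis (Lemma~\ref{lem:baseline-pivot}) bounds the expected charge per round under $x$ by $\alpha$ times the LP mass removed in that round. The plan is to redo this round-by-round analysis with probabilities $f(\hat{x})$ in place of $f(x)$. The expected cost in a round is a multilinear expression in the join probabilities, of the form $\sum_{v}\sum_{u,u'}$ over the triple $(v,u,u')$. Replacing $f(x)$ by $f(\hat{x})$ changes the contribution of edge $uu'$ by at most $w_{uu'}\bigl(|\Delta_{vu}|+|\Delta_{vu'}|\bigr)$, where $\Delta_{vu} = f(\hat{x}_{vu}) - f(x_{vu})$.

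The main obstacle is that this sum is indexed by triples. Naively it gives $\sum_{u,u'} w_{uu'}\cdot\frac{1}{n}\sum_v(\ldots)$, and that charges edge $uu'$ with the errors on the pivot edges $vu$, not with its own $\Gamma(u,u')$. The natural bound would be in terms of $\sum_{vu} \Gamma(v,u)\cdot(\text{weight incident to } u)$, which is not $W\overline{\Gamma}_w$ unless the weights are balanced. My first attempt is to charge the other way. In the triple analysis, the cost of edge $uu'$ when $v$ is the pivot is controlled by the budget $\alpha$ times the LP mass of the triangle, and the perturbation can be absorbed into the pivot edges' own terms: the probability that $vu$ is decided incorrectly, times $w_{vu}$, together with the effect on the remaining triangle. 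This gives per-edge charge $O(L)\,w_{vu}\Gamma(v,u)$, summed over all pairs, which equals $O(L)W\overline{\Gamma}_w$. If pseudometric or normalized weights are needed to move the weight from $uu'$ onto $vu$, that would be the place to invoke $w_{uu'} \leq w_{uv}+w_{vu'}$.

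Finally, the sample size $\Omega(n^{3/2}\sqrt{\log n})$ enters only through Lemma~\ref{lem:witness-density}. It guarantees that with high probability every pair has a witness, so no pair falls back to the default $\Gamma = 1$. The failure event has probability at most $n^{-2}$ and adds at most $n^{-2}W$ to the expectation, which is absorbed into the $O(L)W\overline{\Gamma}_w$ term. The remark that the bound specializes to Theorem~\ref{thm:sparse-pivot} follows because under the $(\rho,\gamma)$-good-witness condition every observed good witness gives $\Gamma(u,v)\le\gamma$, so $\overline{\Gamma}_w\le\gamma$.
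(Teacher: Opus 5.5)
Your first paragraph is correct and matches the paper. For every pair, including the default value $1/2$, you have $|\hat{x}_{uv}-x_{uv}|\le\Gamma(u,v)/2$, and Lipschitzness carries this over to the rounding probabilities.

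The gap is the step in your third paragraph where the perturbation is ``absorbed into the pivot edges' own terms'' to give a charge of $O(L)\,w_{vu}\Gamma(v,u)$. That sentence is not an argument, and the obstruction you name just before it is real. Split the perturbed triple analysis into two kinds of terms.
\begin{itemize}
\item The pivot-edge terms are harmless. Summed over rounds they total $\sum_{v\neq u} w_{vu}|\Delta_{vu}|\,\Pr[v\text{ pivots while }u\text{ is alive}]$, which is $O(L)\,W\,\overline{\Gamma}_w$.
\item Flipping whether $u$ joins $v$ also changes the status of every edge $uu'$ with $u'$ still alive. This gives non-pivot terms $\sum_{v}\sum_{\{u,u'\}}\pi_{v;uu'}\,w_{uu'}\bigl(|\Delta_{vu}|+|\Delta_{vu'}|\bigr)$, where $\pi_{v;uu'}=\Pr[v\text{ pivots while }u,u'\text{ are alive}]$. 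Nothing in the Fan--Lee--Lee per-triple inequality moves $w_{uu'}$ onto $w_{vu}$.
\end{itemize}

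Three further problems block your proposed fixes.
\begin{itemize}
\item The $\frac1n$ in your naive bound holds only per round. Summed over rounds, the coefficient of $w_{uu'}|\Delta_{vu}|$ becomes $\pi_{v;uu'}$. Its sum $\sum_v\pi_{v;uu'}$ is the expected number of pivots drawn while both $u$ and $u'$ survive, which is $\Theta(n)$ when few vertices join pivots. So even uniform weights do not turn this bound into $O(L)\,W\,\overline{\Gamma}_w$.
\item The inequality $w_{uu'}\le w_{uv}+w_{vu'}$ is not available, since the theorem is stated for general weighted instances. Even granting it, you are left with cross terms $w_{vu'}|\Delta_{vu}|$. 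These multiply the error on a possibly light pivot edge by the weight of a possibly heavy one.
\item Your coupling does not help. After the first flip the two runs can diverge completely, so it only gives the unweighted bound $W\cdot\Pr[\text{some flip}]$.
\end{itemize}

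The paper's proof takes exactly the shortcut you were suspicious of. It asserts that the perturbation on $(v,u)$ has expected additional disagreement contribution $O(L)\,w_{vu}\Gamma(v,u)$ and sums over pivot steps by linearity. In other words, it accounts only for the pivot-edge terms above. Your final charge therefore coincides with the paper's. Neither your sketch nor that sentence supplies the missing ingredient: a bound on how imputation errors on pivot edges propagate to the non-pivot edge of each triple. That would need either additional structure or a different error term, for example one weighting $\Gamma(v,u)$ by weighted degrees.

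Two smaller points concern your last paragraph.
\begin{itemize}
\item There is no failure event to pay for. Your first paragraph already covers witnessless pairs via $\Gamma=1$, so the per-pair error bound holds for every realization of $S$, and the sample size only serves to make $\overline{\Gamma}_w$ small. In any case, $n^{-2}W$ is not dominated by $O(L)\,W\,\overline{\Gamma}_w$ when $\overline{\Gamma}_w$ is tiny.
\item The claim $\overline{\Gamma}_w\le\gamma$ under the good-witness condition needs every pair to have an \emph{observed} good witness. That holds only on the high-probability event of Lemma~\ref{lem:minwidth-imputation}, which requires $np^2\ge c\log n/\rho$. It is not deterministic.
\end{itemize}
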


The statistic $\overline{\Gamma}_w$ is computable from $S$ and $x$ \emph{before} running the pivot, serving as a practical diagnostic.
We also prove the good-witness condition holds for tree-metric LP solutions (Proposition~\ref{prop:tree-metric}).
Proofs of both theorems and the tree-metric result are in Appendix~\ref{app:sparse-pivot-proof}.

\section{Lower Bound for General Weighted CC}\label{sec:lower-bound}

The preceding sections show that pseudometric structure enables effective sparsification.
We now prove that without this structure, sparsification fundamentally fails.

\begin{theorem}[Lower bound for general weighted CC]\label{thm:lower-bound}
For general weighted CC on $n$ vertices in the uniform sampling model, any algorithm with $m = o(n)$ samples has expected approximation ratio $\omega(1)$ on some family of instances.
\end{theorem}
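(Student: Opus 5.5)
We prove Theorem~\ref{thm:lower-bound} with Yao's minimax principle. We fix a distribution $\mathcal{D}$ over instances built around a \emph{hidden heavy negative clique}, and show that every deterministic algorithm observing $m=o(n)$ uniformly random pairs has expected ratio $\omega(1)$ under $\mathcal{D}$. The access model is the one used in this section for general weights. The algorithm learns the sign and weight of a pair only when it is sampled; unsampled pairs are known to be positive with weight $1$ unless they are heavy. This differs from the LP-marginal model of Section~\ref{sec:sparse-pivot}, and the difference matters, because with globally known weights there would be nothing to hide.

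\textbf{Construction.} Draw a uniformly random set $K\subseteq V$ of size $k$, with $k=2$ as the simplest choice, or a small constant. Every pair inside $K$ is negative with weight $M\gg n^2$. Every other pair is positive with weight $1$. The weights deliberately violate the triangle inequality, since $M > 1+1$. The optimum separates the vertices of $K$ into different clusters: put all of $V\setminus K$ together with one vertex of $K$ in a single cluster, and make the remaining vertices of $K$ singletons. This costs $(k-1)(n-1)=O(n)$. Any clustering that places two vertices of $K$ together pays at least $M$.

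\textbf{Information argument.} We use $m$ uniformly random pair samples. For $k=2$ the heavy set is a single pair, so the chance of sampling it is at most $m/\binom n2$. The more useful event is that some sampled pair touches $K$ at all, which has probability $O(mk/n)=o(1)$. Conditioned on no sample touching $K$, the posterior on $K$ is uniform over $k$-subsets disjoint from the touched vertices, and there are at least $n-2m=n-o(n)$ untouched vertices. Fix any output clustering $\mathcal{C}$ of a deterministic algorithm. Then
\[
\Pr[\text{$\mathcal{C}$ merges two vertices of $K$}] \;\ge\; \Pr[\text{a uniformly random pair of untouched vertices is co-clustered}].
\]
This creates a dichotomy. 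If $\mathcal{C}$ has a cluster of size $\Omega(n)$, this probability is $\Omega(1)$, so the expected cost is $\Omega(M)$. If every cluster has size $o(n)$, then $\mathcal{C}$ cuts $\Omega(n^2)$ positive unit edges, which is already $\Omega(n^2)/O(n)=\omega(1)$ times $\OPT$. More precisely, let $s=\sum_i |C_i|^2/n^2$. The cost is at least
\[
\Omega\bigl(s\,M + (1-s)n^2\bigr)\big/\,O(n),
\]
and with $M\ge n^2$ this is $\Omega(n)$ for every $s$. Averaging over the $1-o(1)$ conditioning event gives expected ratio $\Omega(n)=\omega(1)$. Yao's principle then transfers the bound to randomized algorithms.

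\textbf{Main obstacle.} The hard step is justifying the access model: why the algorithm does not know which pairs are heavy without sampling them. The lower bound is only meaningful if the weights of unsampled pairs are hidden. I would state this explicitly, in the style of Appendix~\ref{app:access-model}: only sampled pairs reveal sign and weight, and the default for unseen pairs is a positive unit edge. A second delicate point is the ratio when $s$ is intermediate, which the $s$-dichotomy above resolves. The last point is tightness of the $o(n)$ threshold. With $m=\Theta(n)$ samples the touching probability is constant, so the construction gives no bound there. That is consistent with the statement, which only claims failure for $m=o(n)$, so I would not pursue a sharper bound.
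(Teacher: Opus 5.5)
Your argument is correct, but it is organized differently from the paper's.

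The paper uses two distributions:
\begin{itemize}
\item a benign $\mathcal{D}_0$, with all edges positive except a planted negative matching that is there only to make $\OPT_0=\Theta(n)>0$;
\item $\mathcal{D}_1$, with a hidden negative clique of size $\lfloor\sqrt n\rfloor$ and weight $M=n^2$.
\end{itemize}
It argues that an algorithm which is $O(1)$-approximate on $\mathcal{D}_0$ cuts only $O(n)$ pairs in expectation. Since its output is (essentially) independent of $K$, it then leaves $\Theta(n)$ heavy $K$-internal pairs uncut under $\mathcal{D}_1$, paying $\Theta(n^3)$ against $\OPT_1=\Theta(n^{3/2})$.

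You use a single distribution and replace the step ``good on $\mathcal{D}_0$ $\Rightarrow$ few cut pairs'' by a trade-off inside the same instance. Every non-$K$ pair is a positive unit edge, so fragmenting costs $\Omega(n^2)$ against $\OPT=(k-1)(n-k)$, while keeping co-clustering density $s$ risks roughly $sM$. This buys you three things:
\begin{itemize}
\item a self-contained bound against every algorithm, not just those that are good on $\mathcal{D}_0$;
\item no auxiliary matching, since positivity of $\OPT$ is automatic;
\item cleaner conditioning: on your event (no sampled pair touches $K$), $K$ is exactly uniform on the untouched vertices and exactly independent of the output. The paper's ``independent of $K$ conditioned on non-observation'' holds only approximately.
\end{itemize}
In exchange, the paper's version gets a larger gap, $\Omega(n^{3/2})$ for algorithms that are constant-factor on $\mathcal{D}_0$, and it makes the indistinguishability intuition explicit. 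Both proofs rely on the same access model (signs and weights revealed only on sampled pairs), and you are right to state it explicitly.

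One detail to tighten: the merge probability is governed by the co-clustering density on the untouched set $U$, not by $s=\sum_i|C_i|^2/n^2$. Write $p_U$ for the fraction of co-clustered pairs inside $U$. The conditional expected cost is then at least $p_U M+(1-p_U)\binom{|U|}{2}-\binom{k}{2}$, which is at least $\binom{|U|}{2}-\binom{k}{2}$ once $M\ge\binom{n}{2}$. Alternatively, note that $|s-s_U|=O(m/n)=o(1)$.

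With $k=2$ your construction also proves more than the theorem asks. Condition only on the heavy pair being unsampled. Then $K$ is uniform over the unsampled pairs, and the same computation gives expected cost at least $(1-m/\binom{n}{2})(\binom{n}{2}-m-1)$. Hence the ratio stays $\Omega(n)$ for every $m\le\binom{n}{2}/2$. That answers the $\Omega(n^2)$ question the paper leaves open in its remark after the proof.
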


\begin{proof}[Proof sketch]
We construct two distributions via a ``hidden heavy negative clique.''
Let $k = \lfloor\sqrt{n}\rfloor$ and $M = n^2$.
$\mathcal{D}_0$: all edges positive (weight 1) except a random matching (negative, weight 1); $\OPT_0 = \Theta(n)$.
$\mathcal{D}_1$: same, but a random $k$-subset $K$ has internal edges negative with weight $M$; $\OPT_1 = \Theta(n^{3/2})$.
With $m = o(n)$ samples, w.h.p.\ no $K$-internal or matching edge is observed, so the two distributions are indistinguishable (Lemma~\ref{lem:indistinguish}).
Any algorithm achieving $O(1)$-approximation on $\mathcal{D}_0$ cuts $O(n)$ pairs, leaving $\Theta(n)$ uncut $K$-internal edges under $\mathcal{D}_1$, incurring cost $\Theta(n^3)$ vs.\ $\OPT_1 = \Theta(n^{3/2})$ (Lemma~\ref{lem:approx-gap}).
By Yao's minimax principle, the ratio is $\omega(1)$.
The full proof with all lemmas is in Appendix~\ref{app:lower-bound-proof}.
\end{proof}

\section{Experimental Validation}\label{sec:experiments}

We validate our theoretical predictions and compare against standard baselines on synthetic and real-world instances.
Our experiments address four questions:
(i) do additive coresets and cutting-plane LP behave as predicted?
(ii) does the $n^{3/2}$ witness threshold manifest empirically?
(iii) how does \textsc{Sparse-LP-Pivot} compare to existing CC algorithms at various sample budgets?
(iv) does the robust statistic $\overline{\Gamma}_w$ reliably predict performance?

We use seven families of instances spanning structured and unstructured regimes:
synthetic pseudometric ($n$ up to $500$), synthetic general, SBM ($k=5$), Political Blogs ($n = 1{,}490$) \cite{AdamicGlance2005}, Facebook ego-network ($n = 4{,}039$) \cite{snapnets}, hidden clique ($n$ up to $400$), and metric violation sweep.
We compare against PIVOT \cite{AilonCharikarNewman2008}, KwikCluster \cite{AilonCharikarNewman2008}, Full LP-PIVOT, and uniform random.
CC-LP is solved via cutting planes (Algorithm~\ref{alg:sparse-lp}) using PuLP/CBC.
All experiments use 20 repetitions.
Full dataset descriptions, per-experiment protocols, and additional figures/tables are in Appendix~\ref{app:experiments}.

\begin{figure}[t]
\centering
\includegraphics[width=0.65\textwidth]{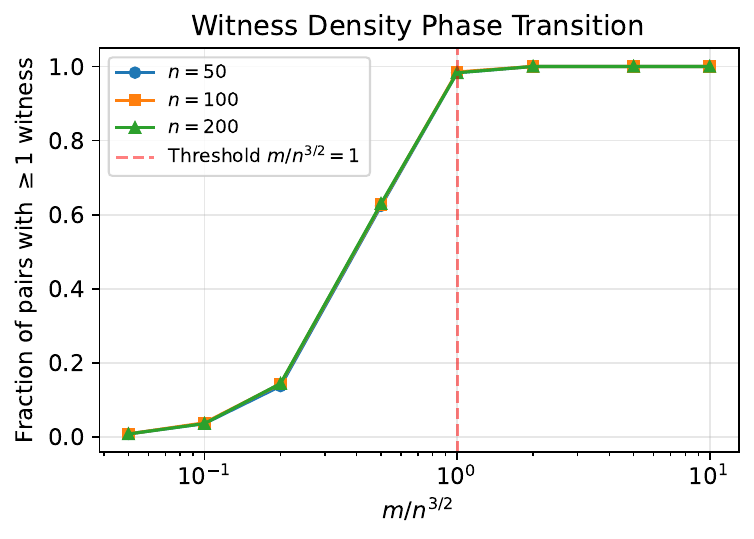}
\caption{Witness density phase transition. The fraction of vertex pairs with $\geq 1$ witness exhibits a sharp threshold at $m/n^{3/2} \approx 1$ (red dashed line), independent of $n$.}
\label{fig:witness}
\end{figure}

\begin{figure}[t]
\centering
\includegraphics[width=0.65\textwidth]{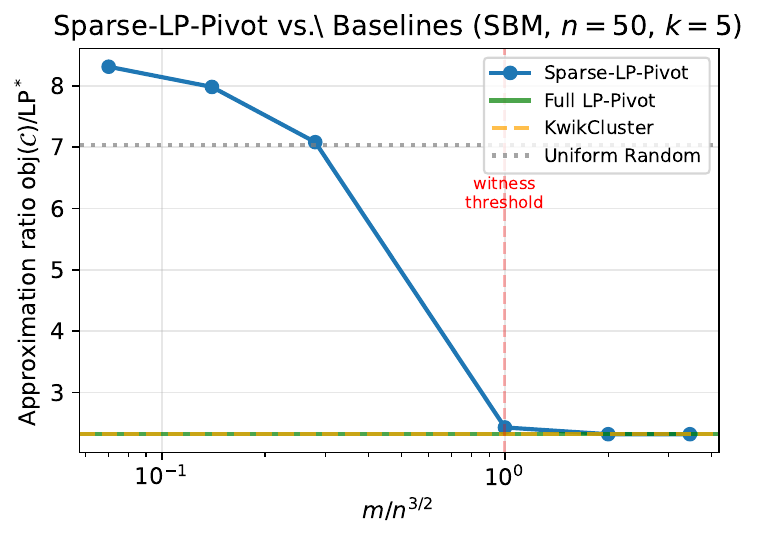}
\caption{Approximation ratio vs.\ sample budget on SBM ($n=50$, $k=5$). \textsc{Sparse-LP-Pivot} converges to Full LP-PIVOT at the witness threshold $m/n^{3/2} \approx 1$.}
\label{fig:baselines}
\end{figure}

The witness density experiment (Figure~\ref{fig:witness}) confirms the sharp $\tilde{\Theta}(n^{3/2})$ phase transition: the fraction of pairs with $\geq 1$ witness jumps from near zero to $>98\%$ at $m/n^{3/2} \approx 1$, with curves nearly identical across $n \in \{50, 100, 200, 500\}$.

The baseline comparison (Figure~\ref{fig:baselines}, Table~\ref{tab:baselines}) shows that \textsc{Sparse-LP-Pivot} converges to Full LP-PIVOT quality at $m \approx n^{3/2}$, precisely at the witness threshold.
Below this threshold, $\overline{\Gamma}_w > 0.7$ and the sparse method degrades significantly.
KwikCluster achieves a comparable ratio without LP marginals on these unweighted instances; the LP advantage is more pronounced on weighted instances with continuous solutions.

\begin{table}[t]
\centering
\caption{Approximation ratio $\obj(\mathcal{C})/\LP^*$ on SBM instance ($n=50$, $k=5$, 15 trials).}
\label{tab:baselines}
\begin{tabular}{lccccc}
\toprule
$m$ & $m/n^{3/2}$ & Sparse-LP-Pivot & Full LP-Pivot & KwikCluster & $\overline{\Gamma}_w$ \\
\midrule
25  & 0.07 & 8.31 & 2.32 & 2.32 & 0.981 \\
50  & 0.14 & 7.98 & 2.32 & 2.32 & 0.930 \\
100 & 0.28 & 7.08 & 2.32 & 2.32 & 0.731 \\
353 & 1.00 & 2.43 & 2.32 & 2.32 & 0.017 \\
706 & 2.00 & 2.32 & 2.32 & 2.32 & 0.000 \\
\bottomrule
\end{tabular}
\end{table}

Additional experiments (Appendix~\ref{app:experiments}) confirm: additive coresets achieve $\varepsilon$-error at the predicted $\tilde{O}(n/\varepsilon^2)$ sample size; the cutting-plane LP uses $<5\%$ of triangle constraints; $\overline{\Gamma}_w$ reliably predicts approximation quality across all datasets; and the hidden-clique construction produces unbounded ratios at $o(n)$ samples.

\section{Conclusion}

We have established a structural dichotomy for the sparsifiability of correlation clustering.
For pseudometric instances, the triangle inequality propagates information across unobserved edges, enabling additive $\varepsilon$-coresets of optimal size $\tilde{O}(n/\varepsilon^2)$, exact LP solving via a small active constraint set, and near-$\frac{10}{3}$-approximate rounding from $\tilde{O}(n^{3/2})$ sampled LP marginals.
For general weighted instances, no such propagation exists, and we prove that $o(n)$ uniform samples are provably insufficient for constant-factor approximation.
The pseudometric condition thus governs not only the tractability of CC but also its robustness to incomplete information.

Several natural questions remain open.
The gap between our $\tilde{O}(n^{3/2})$ upper bound for sparse rounding and the $\Omega(n)$ lower bound for general CC leaves room for a tighter characterization in the pseudometric regime.
Our good-witness condition is verified for tree metrics (Proposition~\ref{prop:tree-metric}) and validated empirically on Euclidean instances, but a proof for general pseudometric or near-metric instances remains elusive.
Finally, whether multiplicative $(1 \pm \varepsilon)$-coresets of size $\tilde{O}(n)$ exist under structural assumptions, and whether the sparsification landscape changes for chromatic correlation clustering \cite{Bonchi2012, FanLeeLee2025}, are directions we find particularly compelling.

\bibliographystyle{plainnat}
\bibliography{cc_sparsification}

\newpage
\appendix

\section{Deferred Preliminaries}\label{app:prelim}

\subsection{Proof Techniques Overview}

The cost of a clustering $\mathcal{C}$ is a weighted sum over edges: $\obj(\mathcal{C}) = \sum_{e \in E} w_e \cdot \mathbf{1}[e \text{ disagrees with } \mathcal{C}]$.
Our coreset result begins by analyzing the \emph{VC dimension} of the hypothesis class $\mathcal{H} = \{h_{\mathcal{C}} : E \to \{0,1\} \mid \mathcal{C} \text{ is a partition}\}$ where $h_{\mathcal{C}}(e) = \mathbf{1}[e \text{ is a disagreement}]$.
We prove $\VC(\mathcal{H}) = n - 1$ exactly, which by standard uniform convergence yields \emph{additive} $\varepsilon$-coresets of size $\tilde{O}(n / \varepsilon^2)$: for every clustering $\mathcal{C}$, $|\obj_H(\mathcal{C}) - \obj_G(\mathcal{C})| \leq \varepsilon W$.
For relative (multiplicative) coresets, we use sensitivity-based sampling; the total sensitivity $\mathfrak{S} = \sum_e s(e) \leq W / \OPT$ always holds, and when $\OPT$ is bounded away from zero this yields efficient relative coresets.

Turning to the LP, we observe that constraints with positive slack at the optimum have zero dual variables and can be removed without affecting the LP value.
At any LP vertex solution, at most $\binom{n}{2}$ constraints (including bounds) are linearly independent among the active set, so at most $\binom{n}{2}$ triangle inequalities appear in any linearly independent active subset.
We give a cutting-plane algorithm that correctly identifies the active constraints with finite termination.

For the rounding phase, LP-PIVOT processes edges incident to each pivot, and when some edges are missing we \emph{impute} their LP values using the triangle inequality: $x^*_{uv} \in [|x^*_{uw} - x^*_{wv}|, x^*_{uw} + x^*_{wv}]$ for any observed vertex $w$.
A critical subtlety is that with $|S|$ uniformly sampled edges, a pair $(u,v)$ has witnesses $w$ (with both $(u,w)$ and $(v,w)$ in $S$) only when $|S| = \tilde{\Omega}(n^{3/2})$.
We prove this threshold is necessary and sufficient, and give an approximation guarantee conditional on a \emph{good-witness fraction} assumption.

Finally, for the lower bound we construct a ``hidden heavy negative clique'' instance: $n$ vertices with uniform positive weights except for a random subset $K$ of size $\sqrt{n}$ whose internal edges are negative with large weight $M$.
Via Yao's minimax principle, we prove that $o(n)$ uniform samples cannot distinguish this from the all-positive instance, leading to an unbounded approximation ratio.

\subsection{LP-PIVOT Algorithm}

The LP-PIVOT algorithm \cite{AilonCharikarNewman2008, ChawlaMakarychev2015} picks a random pivot $v \in V$, assigns each $u \in V \setminus \{v\}$ to $v$'s cluster with probability $1 - f(x^*_{uv})$ where $f$ is a rounding function, and recurses on the remaining vertices.
Its analysis uses a \emph{triple-based} framework: for each triple $(u,v,w)$, the expected algorithmic cost is bounded by $\alpha$ times the LP cost contribution.

\begin{lemma}[Baseline LP-PIVOT guarantee]\label{lem:baseline-pivot}
Fix $L$-Lipschitz rounding functions $f^+, f^- : [0,1] \to [0,1]$.
The corresponding LP-PIVOT algorithm (with exact access to all LP marginals of a feasible solution $x$) satisfies
$\E[\obj(\mathcal{C})] \leq \alpha \, \LP(x)$
for some constant $\alpha$ depending on $f^+, f^-$.
For the specific rounding functions of \cite{FanLeeLee2025}, $\alpha = 10/3$ and $L = 2$.
\end{lemma}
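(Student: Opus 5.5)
The plan is to follow the standard triple-based charging framework of \cite{AilonCharikarNewman2008, ChawlaMakarychev2015}. The first part of the statement (some finite $\alpha$ for a given pair $f^+, f^-$) will come out of that framework. The second part ($\alpha = 10/3$, $L = 2$) I would take from the triple-wise verification in \cite{FanLeeLee2025} rather than redo it.

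\textbf{Step 1: decompose the cost over pivot events.} Condition on the recursion reaching a residual vertex set $V'$ and pick pivot $w \in V'$ uniformly. For a pair $u,v \in V' \setminus \{w\}$, the pair is \emph{settled} in this round if at least one of $u,v$ joins $w$'s cluster. Define:
\begin{itemize}
\item the \emph{algorithmic cost} $\mathrm{alg}_w(u,v)$, which is $w_{uv}$ times the probability that this settlement produces a disagreement. For a positive edge this is exactly one of them joining; for a negative edge it is both joining. These probabilities are expressions in $f^{\pm}(x_{uw})$ and $f^{\pm}(x_{vw})$.
\item the \emph{LP charge} $\mathrm{lp}_w(u,v)$, which is $w_{uv}$ times the probability of settlement times the LP cost of $uv$ (that is, $x_{uv}$ or $1-x_{uv}$).
\end{itemize}
Pivot-adjacent edges $wu$ are handled as degenerate triples with their own cost and charge.

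Every edge is settled exactly once and its LP term is charged exactly once. Hence, by linearity over the random recursion,
\[
\E[\obj(\mathcal{C})] = \E\Big[\sum_{\text{rounds}} \sum_{\{u,v\}} \mathrm{alg}_w(u,v)\Big], \qquad \LP(x) = \E\Big[\sum_{\text{rounds}} \sum_{\{u,v\}} \mathrm{lp}_w(u,v)\Big].
\]
Because the pivot is uniform, each round's contribution can be symmetrized over the three rotations of every triple $\{u,v,w\} \subseteq V'$, plus the pairs where the pivot is an endpoint.

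\textbf{Step 2: reduce to a pointwise triple inequality.} It then suffices to show, for every triple and every sign pattern,
\[
\sum_{\text{rotations}} \mathrm{alg} \;\le\; \alpha \sum_{\text{rotations}} \mathrm{lp},
\]
as functions of $(x_{uv}, x_{uw}, x_{vw}) \in [0,1]^3$ satisfying the three triangle inequalities. I would treat the weights separately per edge, as in the weighted analyses of \cite{ChawlaMakarychev2015, FanLeeLee2025}. For the existential claim, define $\alpha$ as the supremum of the ratio of the two sides over the compact triangle-feasible region and the finitely many sign patterns.

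\textbf{Main obstacle: finiteness of this supremum.} This is the step I expect to be hardest. The ratio can blow up only where the LP charge vanishes, i.e.\ on positive edges with $x = 0$ and negative edges with $x = 1$. There the boundary behaviour of $f^{\pm}$ matters: one needs $f^+(0) = 0$ and $f^-$ equal to $1$ at $1$, or an equivalent vanishing condition. The $L$-Lipschitz property is then what makes the algorithmic cost vanish at least linearly as the LP charge vanishes, keeping the ratio bounded near the boundary. I would check this near-boundary linear comparison carefully, since it is where a generic "for some constant $\alpha$" could fail. If it cannot be made fully general, I would state the boundary conditions as standing assumptions, which the functions of \cite{FanLeeLee2025} satisfy.

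\textbf{Step 3: the specific constants.} For the rounding functions of \cite{FanLeeLee2025}, the Lipschitz constant $L = 2$ can be read off their piecewise-linear form. The triple inequality with $\alpha = 10/3$ is exactly their case-by-case triangle verification, which I would invoke directly.
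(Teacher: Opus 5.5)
The paper gives no argument for this lemma beyond the pointer to \cite{FanLeeLee2025}, so your Step~3 matches what the paper does. The gap is in the generic part, and it goes deeper than the boundary issue you flagged. That flag is itself a correct catch: if $f^+(0)>0$, an all-positive instance with $x\equiv 0$ has $\LP(x)=0$ but positive expected cost. The more serious problem is that for arbitrary nonnegative weights no constant $\alpha$ can exist, whatever the rounding functions. General weighted CC contains multicut, and CC-LP has an $\Omega(\log n)$ integrality gap there. Taking $x=x^*$ would give $\OPT\le\E[\obj(\mathcal{C})]\le\alpha\,\LP^*$, a contradiction. So the statement needs a hypothesis on the weights, e.g.\ pseudometric weights, which is the regime where the $10/3$ guarantee is available and the one this paper ultimately cares about. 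Your plan never uses such a hypothesis.

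Concretely, Step~2 fails at ``treat the weights separately per edge.'' The triple inequality closes only through cross-edge compensation. When a positive edge $uv$ has $x_{uv}\approx 0$, its algorithmic cost under pivot $w$ is not paid by its own tiny LP charge. It is paid by the LP charges of $uw$ and $vw$ under pivots $v$ and $u$. This is also what your ``near-boundary linear comparison'' really needs: Lipschitzness bounds the algorithmic side from above but gives no lower bound on the LP side.

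With independent per-edge weights this compensation disappears. Take all three edges positive, $f^\pm(t)=t$ (so $L=1$ and your boundary conditions hold), $x_{uv}=\epsilon$, $x_{uw}=x_{vw}=1/2$, $w_{uv}=1$ and $w_{uw}=w_{vw}=\delta$. Pivot $w$ separates $u$ and $v$ with probability $1/2$, so the weighted algorithmic cost is at least $1/2$. The total weighted LP charge over the three rotations is at most $\tfrac34\epsilon+\delta$. Hence the supremum defining your $\alpha$ is infinite once weights are included, and the unit-weight triple inequality does not imply the weighted one.

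To repair the argument, carry $w_{uv},w_{uw},w_{vw}$ through the triple inequality and add a weight assumption such as $w_{uv}\le w_{uw}+w_{vw}$. This is what lets you charge the algorithmic cost of a low-LP-cost edge to the other two edges of the triangle. Normalizing the triple's weights then gives you compactness for the existential claim.
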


\subsection{VC Dimension and Uniform Convergence}

\begin{definition}[VC dimension]
Let $\mathcal{H}$ be a set system over ground set $\mathcal{X}$.
A set $S \subseteq \mathcal{X}$ is \emph{shattered} by $\mathcal{H}$ if $\{S \cap h : h \in \mathcal{H}\} = 2^S$.
The \emph{VC dimension} $\VC(\mathcal{H})$ is the size of the largest shattered set.
\end{definition}

\begin{theorem}[Uniform convergence, \cite{VapnikChervonenkis1971}]\label{thm:vc-convergence}
Let $\mathcal{H}$ be a hypothesis class with $\VC(\mathcal{H}) = d$.
If $S$ is a set of $m \geq c \cdot \frac{d \log(1/\varepsilon) + \log(1/\delta)}{\varepsilon^2}$ i.i.d.\ samples from a distribution $\mathcal{D}$, then with probability $\geq 1 - \delta$:
\[
\sup_{h \in \mathcal{H}} \left| \frac{1}{m}\sum_{s \in S} h(s) - \E_{s \sim \mathcal{D}}[h(s)] \right| \leq \varepsilon.
\]
\end{theorem}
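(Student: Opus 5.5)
The plan is the classical symmetrization argument, with one refinement to get the $d\log(1/\varepsilon)$ dependence rather than $d\log m$. Write $\hat{\E}_S h = \frac{1}{m}\sum_{s\in S} h(s)$ and $\E h = \E_{s\sim\mathcal{D}}[h(s)]$.

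\textbf{Step 1 (ghost sample).} Draw an independent ghost sample $S'$ of size $m$ from $\mathcal{D}$. For $m \geq 8/\varepsilon^2$, Chebyshev applied to a fixed $h$ gives the standard inequality
\[
\Pr\Big[\sup_h |\hat{\E}_S h - \E h| > \varepsilon\Big] \leq 2\Pr\Big[\sup_h |\hat{\E}_S h - \hat{\E}_{S'} h| > \varepsilon/2\Big].
\]

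\textbf{Step 2 (random swaps).} The joint law of $(S,S')$ is invariant under swapping $s_i \leftrightarrow s'_i$ independently for each $i$. So it suffices to fix the double sample $T = S\cup S'$ of $2m$ points and bound the probability over random swaps (Rademacher signs $\sigma_i$). This probability is that of $\sup_h \big|\frac1m\sum_i \sigma_i (h(s_i)-h(s'_i))\big| > \varepsilon/2$.

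\textbf{Step 3 (cover the restricted class).} Project $\mathcal{H}$ onto $T$, and equip the projection with the normalized Hamming distance $\rho_T(h,g) = \frac{1}{2m}\sum_{t\in T}|h(t)-g(t)|$. Haussler's packing lemma says a class of VC dimension $d$ has $\eta$-packing number at most $(c/\eta)^{O(d)}$ under any $L_1$ metric of this form, with no dependence on $m$. Fix a maximal $\varepsilon/16$-packing $N$, which is also an $\varepsilon/16$-cover, so $|N| \leq (c/\varepsilon)^{O(d)}$. For $h$ with $\rho_T(h,g)\leq \varepsilon/16$, the two half-sample averages of $h$ and $g$ each differ by at most $2\cdot\varepsilon/16$, since each half carries half the mass. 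Hence the symmetrized deviation of $h$ exceeds that of $g$ by at most $\varepsilon/4$. It therefore suffices to show that every $g\in N$ has symmetrized deviation at most $\varepsilon/4$.

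\textbf{Step 4 (Hoeffding and union bound).} For a fixed $g$, the symmetrized sum has independent, bounded, mean-zero terms, so Hoeffding gives a tail bound of $2\exp(-m\varepsilon^2/32)$. A union bound over $N$ and the factor $2$ from Step 1 give failure probability
\[
4\,(c/\varepsilon)^{O(d)}\, e^{-m\varepsilon^2/32}.
\]
This is at most $\delta$ once $m \geq c'\,(d\log(1/\varepsilon)+\log(1/\delta))/\varepsilon^2$, which is the stated bound.

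\textbf{Main obstacle.} The hardest step is the $m$-independent cover in Step 3. The naive route through the Sauer--Shelah lemma bounds the projection by $(2em/d)^d$. Solving that for $m$ gives $d\log(d/\varepsilon)/\varepsilon^2$, which carries an extra $\log d$ factor not present in the stated form. If I want to avoid invoking Haussler's lemma as a black box, I would first try a direct probabilistic argument: subsample $O(d/\eta)$ points of $T$ and apply Sauer--Shelah to the subsample, which yields a packing bound of $(c/\eta)^{O(d)}$. Failing that, I would settle for the Sauer--Shelah version and note that it already suffices for Theorem~\ref{thm:coreset} up to the polylogarithmic factors hidden in $\tilde{O}$.
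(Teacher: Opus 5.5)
Your proof is correct. The paper gives no proof of this statement: it imports it as a black box from Vapnik--Chervonenkis and uses it to derive Theorem~\ref{thm:coreset}.

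The classical argument behind that citation matches your Steps 1--2. It then takes a union bound over the entire projection $\mathcal{H}|_T$, applies Hoeffding at scale $\varepsilon/2$, and bounds $|\mathcal{H}|_T|$ by Sauer--Shelah. This gives failure probability at most $4(2em/d)^d e^{-m\varepsilon^2/8}$. Your single-scale Haussler cover is a valid substitute.

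However, your ``main obstacle'' paragraph is mistaken: the Sauer--Shelah route loses no $\log d$. Take $m = C\varepsilon^{-2}(dL+\Lambda)$ with $L=\log(1/\varepsilon)$ and $\Lambda=\log(1/\delta)$. Using $\log L\le L$ and $d\log(1+\Lambda/(dL))\le \Lambda/L$, one gets
\[
d\log\frac{2em}{d} = d\log(2eC) + 2dL + d\log\Bigl(L+\frac{\Lambda}{d}\Bigr) \le O(dL) + \frac{\Lambda}{L}.
\]
Hence the requirement $m\varepsilon^2/8 \ge d\log(2em/d)+\log(4/\delta)$ holds once $C$ is a large enough constant. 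The $d$ in the denominator of $2em/d$ is what cancels the $\log d$. That spurious factor appears only if you use the cruder bound $|\mathcal{H}|_T|\le(2m)^d$.

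So both routes give exactly the stated $d\log(1/\varepsilon)$ dependence. Haussler's $m$-independent packing bound pays off only when combined with chaining across scales, which removes the $\log(1/\varepsilon)$ factor and gives $O((d+\log(1/\delta))/\varepsilon^2)$.

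One small caveat applies to either route. Absorbing $\log(c/\varepsilon)$ or $\log(2eC)$ into $\log(1/\varepsilon)$ requires $\varepsilon$ bounded away from $1$ (say $\varepsilon\le 1/2$), which the statement assumes tacitly.
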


\section{Full Proofs for Edge Coresets}\label{app:coreset-proofs}

\subsection{VC Dimension of Clustering Disagreements}\label{app:vc-proof}

For a signed graph $G = (V, E^+ \uplus E^-)$, define the hypothesis class $\mathcal{H}_G$ over the edge set $E = E^+ \cup E^-$:
\[
\mathcal{H}_G = \{h_{\mathcal{C}} : E \to \{0,1\} \mid \mathcal{C} \text{ is a partition of } V\}
\]
where $h_{\mathcal{C}}(uv) = \mathbf{1}[uv \text{ is a disagreement under } \mathcal{C}]$.

\begin{proof}[Proof of Theorem~\ref{thm:vc-dim}]
\textbf{Lower bound: $\VC(\mathcal{H}_G) \geq n - 1$.}

Let $S = \{v_1 v_i : i = 2, \ldots, n\}$ be the $n-1$ star edges incident to $v_1$ (with their given signs in $G$).
For any labeling $\ell : S \to \{0,1\}$ (interpreted as desired disagreement indicators), we construct a clustering $\mathcal{C}_\ell$ that realizes it.
Initialize a cluster $C$ containing $v_1$.
For each $i \geq 2$:
\begin{itemize}[nosep,leftmargin=2em]
\item If $v_1 v_i \in E^+$: place $v_i$ in $C$ iff $\ell(v_1 v_i) = 0$; otherwise make $v_i$ a singleton.
\item If $v_1 v_i \in E^-$: place $v_i$ in $C$ iff $\ell(v_1 v_i) = 1$; otherwise make $v_i$ a singleton.
\end{itemize}
Then for every star edge $v_1 v_i \in S$, the edge disagrees under $\mathcal{C}_\ell$ iff $\ell(v_1 v_i) = 1$.
Hence $S$ is shattered and $\VC(\mathcal{H}_G) \geq n - 1$.

\textbf{Upper bound: $\VC(\mathcal{H}_G) \leq n - 1$.}

We show that no set of $n$ edges can be shattered.
Let $S = \{e_1, \ldots, e_n\}$ be any set of $n$ edges of $K_n$.
View $S$ as a graph on at most $n$ vertices with $n$ edges.
Since a forest on $n$ vertices has at most $n - 1$ edges, $S$ contains a cycle $C = (e_{i_1}, \ldots, e_{i_k})$ of length $k \geq 3$, visiting distinct vertices $u_1, \ldots, u_k$ with $e_{i_j} = u_j u_{j+1}$ (indices mod $k$).

For any partition $\mathcal{C}$, define $\sigma_j = \mathbf{1}[u_j \text{ and } u_{j+1} \text{ in different clusters}]$.
We claim that $\sigma$ cannot have exactly one nonzero entry.
Indeed, if $\sigma_1 = 1$ and $\sigma_j = 0$ for $j = 2, \ldots, k$, then transitivity along the zero edges gives $u_2 = u_3 = \cdots = u_k = u_1$ (all in the same cluster), contradicting $\sigma_1 = 1$ (i.e., $u_1 \neq u_2$).
By symmetry, the same holds for any single nonzero entry.

Now, the disagreement indicator satisfies $h_{\mathcal{C}}(e_{i_j}) = \sigma_j \oplus \mathbf{1}[e_{i_j} \in E^-]$, where $\oplus$ denotes XOR.
Since this is a bijection between $\sigma$-patterns and $h$-patterns (XOR with a fixed vector), the $k$ forbidden $\sigma$-patterns (exactly one nonzero entry) map to $k$ forbidden $h$-patterns on the cycle edges.

Consider any labeling $\ell : S \to \{0,1\}$ whose restriction to the cycle edges $\ell|_C$ is one of these $k$ forbidden $h$-patterns.
No partition can realize $\ell$, because the cycle constraint is violated regardless of the non-cycle edges.
There are $k \cdot 2^{n-k} \geq 3 \cdot 2^{n-3} > 0$ such unrealizable labelings.
Therefore $|\{h_{\mathcal{C}}|_S : \mathcal{C}\}| \leq 2^n - k \cdot 2^{n-k} < 2^n$, so $S$ is not shattered.
\end{proof}

\begin{remark*}
The tight bound $\VC(\mathcal{H}_G) = n - 1$ improves the naive Bell-number upper bound of $O(n\log n)$ and yields optimal coreset sample complexity $\tilde{O}(n/\varepsilon^2)$ (Theorem~\ref{thm:coreset}).
We verified computationally that $\VC(\mathcal{H}_G) = n - 1$ for all $n \leq 7$.
\end{remark*}

\subsection{Coreset Construction}\label{app:coreset-proof}

\begin{remark*}
The additive formulation is the natural one for VC-based coresets.
A multiplicative $(1 \pm \varepsilon)$ guarantee requires $\obj_G(\mathcal{C}) > 0$ for all $\mathcal{C}$, which fails when $\OPT = 0$ (e.g., all edges positive).
We address relative-error coresets via sensitivity sampling in Appendix~\ref{app:sensitivity}.
\end{remark*}

\begin{proof}[Proof of Theorem~\ref{thm:coreset}]
For any clustering $\mathcal{C}$:
\[
\obj_G(\mathcal{C}) = \sum_e w_e \, h_{\mathcal{C}}(e) = W \cdot \E_{e \sim p}[h_{\mathcal{C}}(e)]
\]
where $p(e) = w_e / W$.
The coreset estimator is:
\[
\obj_H(\mathcal{C}) = \frac{W}{m} \sum_{i=1}^{m} h_{\mathcal{C}}(e_i) = W \cdot \frac{1}{m}\sum_{i=1}^{m} h_{\mathcal{C}}(e_i).
\]
Therefore:
\[
\frac{1}{W} |\obj_H(\mathcal{C}) - \obj_G(\mathcal{C})| = \left|\frac{1}{m}\sum_{i=1}^{m} h_{\mathcal{C}}(e_i) - \E_{e \sim p}[h_{\mathcal{C}}(e)]\right|.
\]

By Theorem~\ref{thm:vc-convergence} applied to $\mathcal{H}_G$ with $\VC(\mathcal{H}_G) = n - 1$ (Theorem~\ref{thm:vc-dim}), the stated sample size $m$ guarantees:
\[
\sup_{\mathcal{C}} \left| \frac{1}{m}\sum_{i=1}^{m} h_{\mathcal{C}}(e_i) - \E_p[h_{\mathcal{C}}(e)] \right| \leq \varepsilon
\]
with probability $\geq 1 - \delta$.
Multiplying both sides by $W$ gives $|\obj_H(\mathcal{C}) - \obj_G(\mathcal{C})| \leq \varepsilon W$ for all $\mathcal{C}$ simultaneously.
\end{proof}

\begin{remark*}
For converting the additive guarantee to an approximation bound: if one runs an $\alpha$-approximation algorithm on $H$ to obtain $\mathcal{C}$, then $\obj_G(\mathcal{C}) \leq \obj_H(\mathcal{C}) + \varepsilon W \leq \alpha \cdot \obj_H(\mathcal{C}^*) + \varepsilon W \leq \alpha \cdot (\obj_G(\mathcal{C}^*) + \varepsilon W) + \varepsilon W = \alpha \cdot \OPT + (\alpha + 1)\varepsilon W$.
This is meaningful when $\OPT = \Omega(\varepsilon W)$, i.e., the instance is not nearly ``trivially clusterable.''
\end{remark*}

\subsection{Sensitivity-Based Coresets for Relative Error}\label{app:sensitivity}

The additive coreset of Theorem~\ref{thm:coreset} preserves costs within $\varepsilon W$.
For a \emph{relative} (multiplicative) guarantee, we use sensitivity sampling.

\begin{definition}[Sensitivity, \cite{FeldmanLangberg2011}]
The \emph{sensitivity} of edge $e$ with respect to weighted CC is:
\[
s(e) = \sup_{\mathcal{C} : \obj_G(\mathcal{C}) > 0} \frac{w_e \cdot h_{\mathcal{C}}(e)}{\obj_G(\mathcal{C})}.
\]
The \emph{total sensitivity} is $\mathfrak{S} = \sum_{e \in E} s(e)$.
\end{definition}

\begin{lemma}[Sensitivity upper bound]\label{lem:pseudometric-sensitivity}
For any weighted CC instance with $\OPT > 0$:
\[
s(e) \leq \frac{w_e}{\OPT} \quad \text{for all } e, \qquad \text{hence} \qquad \mathfrak{S} = \sum_e s(e) \leq \frac{W}{\OPT}.
\]
\end{lemma}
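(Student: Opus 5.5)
The bound follows by comparing the clustering that realizes the supremum in $s(e)$ with an optimal clustering. Fix an edge $e$ and any clustering $\mathcal{C}$ with $\obj_G(\mathcal{C}) > 0$. The numerator satisfies $w_e \cdot h_{\mathcal{C}}(e) \le w_e$, since $h_{\mathcal{C}}(e) \in \{0,1\}$. The denominator satisfies $\obj_G(\mathcal{C}) \ge \OPT$ by the definition of $\OPT$ as the minimum cost over all partitions. Under the standing assumption $\OPT > 0$, the denominator is therefore bounded below by a positive quantity independent of $\mathcal{C}$. This gives
\[
\frac{w_e \, h_{\mathcal{C}}(e)}{\obj_G(\mathcal{C})} \le \frac{w_e}{\OPT}
\]
for every admissible $\mathcal{C}$.

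Taking the supremum over all $\mathcal{C}$ with $\obj_G(\mathcal{C}) > 0$ preserves the inequality, so $s(e) \le w_e/\OPT$. The supremum is over a nonempty set: every clustering has positive cost when $\OPT > 0$, and there are finitely many partitions, so it is in fact a maximum.

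Summing over $e \in E$ and using $W = \sum_e w_e$ gives $\mathfrak{S} = \sum_e s(e) \le \sum_e w_e/\OPT = W/\OPT$.

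There is no real obstacle here. The only point needing care is that the hypothesis $\OPT > 0$ is what makes the uniform lower bound on the denominator nontrivial, and it also ensures the domain of the supremum is all partitions. I would remark that the bound is essentially attained, up to the ratio $w_e/\OPT$, whenever some optimal clustering disagrees on $e$.
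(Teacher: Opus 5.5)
Your proof is correct and follows the same route as the paper: you bound the numerator by $w_e$ and the denominator below by $\OPT$, take the supremum, and sum over edges. The paper merely splits into the cases $h_{\mathcal{C}}(e)=0$ and $h_{\mathcal{C}}(e)=1$ instead of using $h_{\mathcal{C}}(e)\le 1$ directly. Your remark is in fact exact: $s(e) = w_e/\OPT$ whenever some optimal clustering disagrees on $e$.
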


\begin{proof}
For any edge $e$ and any clustering $\mathcal{C}$ with $\obj_G(\mathcal{C}) > 0$:
if $h_{\mathcal{C}}(e) = 0$, the ratio is $0$.
If $h_{\mathcal{C}}(e) = 1$:
\[
\frac{w_e \cdot 1}{\obj_G(\mathcal{C})} \leq \frac{w_e}{\OPT}
\]
since $\obj_G(\mathcal{C}) \geq \OPT$.
Taking the supremum over $\mathcal{C}$ gives $s(e) \leq w_e / \OPT$.
Summing over all edges: $\mathfrak{S} \leq W / \OPT$.
\end{proof}

\begin{remark*}
This bound is tight in general: a single heavy edge $e^*$ with $w_{e^*} = W - o(W)$ that the optimal clustering must disagree with has $s(e^*) \approx W / \OPT$.
The bound $\mathfrak{S} = W / \OPT$ can range from $O(1)$ (when cost is spread evenly) to $\Theta(n^2)$ (when $\OPT$ is very small relative to $W$).

One might hope that pseudometric weights alone force $\OPT \geq \Omega(W/n)$, giving $\mathfrak{S} = O(n)$.
This is \textbf{false}: consider all edges positive with pseudometric weights, in which case $\OPT = 0$.
Even with mixed signs, the pseudometric condition alone does not prevent $\OPT$ from being arbitrarily small relative to $W$.
Obtaining $\mathfrak{S} = O(n)$ requires stronger structural assumptions, such as a clusterability or margin condition (e.g., $\OPT \geq \Omega(W/n)$ as an explicit assumption).
\end{remark*}

\begin{theorem}[Relative error above a threshold]\label{thm:sensitivity-coreset}
Let $G$ be a weighted CC instance with total weight $W$ and $p(e) = w_e/W$.
Fix parameters $\varepsilon, \delta \in (0,1)$ and a threshold $\tau > 0$.
Sample $m$ edges i.i.d.\ from $p$ and reweight each sampled edge by $W/m$ (summing weights over repeats) to form $H$.
If
\[
m = \Omega\!\left(\frac{d \log(1/\varepsilon) + \log(1/\delta)}{\varepsilon^2} \cdot \frac{W^2}{\tau^2}\right),
\]
where $d = \VC(\mathcal{H}_G) = n - 1$, then with probability $\geq 1 - \delta$, simultaneously for all clusterings $\mathcal{C}$:
\[
\obj_G(\mathcal{C}) \geq \tau \quad \Longrightarrow \quad |\obj_H(\mathcal{C}) - \obj_G(\mathcal{C})| \leq \varepsilon \, \obj_G(\mathcal{C}).
\]
Equivalently, for all $\mathcal{C}$: $|\obj_H(\mathcal{C}) - \obj_G(\mathcal{C})| \leq \varepsilon \, \max\{\obj_G(\mathcal{C}),\, \tau\}$.
\end{theorem}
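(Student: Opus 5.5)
The plan is to reduce Theorem~\ref{thm:sensitivity-coreset} to the additive coreset of Theorem~\ref{thm:coreset}, run at a smaller additive accuracy. We set $\varepsilon' := \varepsilon\tau/W$ and invoke Theorem~\ref{thm:coreset} (equivalently, Theorem~\ref{thm:vc-convergence} applied to $\mathcal{H}_G$ with $d = n-1$ by Theorem~\ref{thm:vc-dim}) with accuracy $\varepsilon'$ and confidence $\delta$. The sampling distribution $p(e) = w_e/W$ and the reweighting by $W/m$ are identical in both theorems. Hence, with probability $\geq 1-\delta$, simultaneously for all clusterings,
\[
|\obj_H(\mathcal{C}) - \obj_G(\mathcal{C})| \leq \varepsilon' W = \varepsilon\tau .
\]

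The remaining step is a pointwise case split on each $\mathcal{C}$. If $\obj_G(\mathcal{C}) \geq \tau$, then $\varepsilon\tau \leq \varepsilon\,\obj_G(\mathcal{C})$, which gives the relative guarantee. In all cases $\varepsilon\tau \leq \varepsilon\max\{\obj_G(\mathcal{C}),\tau\}$, which gives the stated equivalent form. The converse direction of the ``equivalently'' (the max-form implies the threshold form) is immediate, so nothing more is needed there.

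The step that needs care is the sample complexity bookkeeping. Substituting $\varepsilon'$ into the requirement of Theorem~\ref{thm:coreset} gives
\[
m = \Omega\!\left(\frac{d\log(W/(\varepsilon\tau)) + \log(1/\delta)}{\varepsilon^2}\cdot\frac{W^2}{\tau^2}\right).
\]
This matches the claimed bound except that $\log(1/\varepsilon)$ becomes $\log(1/\varepsilon')$. I would handle this in one of two ways. The first option is to note that the paper's $\tilde{O}$ convention suppresses polylogarithmic factors, and the extra $\log(W/\tau)$ is such a factor (when $\tau \geq \OPT$ is polynomially related to $W$). The second, cleaner option is to use the sharper form of uniform convergence for VC classes, $m = O((d + \log(1/\delta))/\varepsilon'^2)$ (via chaining / Haussler's packing bound), which removes the log entirely and is dominated by the stated expression.

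I expect this constant/log-factor matching to be the only real obstacle; the probabilistic content is entirely inherited from Theorem~\ref{thm:coreset}. If $\tau > W$, then no clustering has $\obj_G(\mathcal{C}) \geq \tau$ except vacuously, and the bound $\varepsilon'W = \varepsilon\tau$ still holds. Applying Theorem~\ref{thm:vc-convergence} with $\varepsilon' \geq 1$ is trivially true since $h \in \{0,1\}$, so no separate case is needed.
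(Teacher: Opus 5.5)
Your proposal is correct and follows the paper's proof essentially verbatim: set $\varepsilon' = \varepsilon\tau/W$, invoke Theorem~\ref{thm:coreset} to get the uniform additive bound $\varepsilon\tau$, and conclude pointwise whenever $\obj_G(\mathcal{C}) \geq \tau$. The paper also ends up with $\log(W/(\varepsilon\tau))$ in place of $\log(1/\varepsilon)$ and absorbs it into $\tilde{O}$; two of your points are slightly more careful than what the paper writes: deriving the $\max\{\obj_G(\mathcal{C}),\tau\}$ form directly from the same additive bound, and noting the sharper $O((d+\log(1/\delta))/\varepsilon'^2)$ uniform-convergence bound.
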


\begin{proof}
By Theorem~\ref{thm:coreset}, with $m = \Omega((d\log(1/\varepsilon') + \log(1/\delta))/\varepsilon'^2)$ we have $|\obj_H(\mathcal{C}) - \obj_G(\mathcal{C})| \leq \varepsilon' W$ for all $\mathcal{C}$ simultaneously.
Set $\varepsilon' := \varepsilon \tau / W$.
Then for any $\mathcal{C}$ with $\obj_G(\mathcal{C}) \geq \tau$:
\[
|\obj_H(\mathcal{C}) - \obj_G(\mathcal{C})| \leq \varepsilon' W = \varepsilon \tau \leq \varepsilon \, \obj_G(\mathcal{C}).
\]
The sample size becomes $m = \Omega((d\log(W/(\varepsilon\tau)) + \log(1/\delta)) \cdot W^2 / (\varepsilon^2 \tau^2))$, which is $\tilde{O}(d \cdot W^2 / (\varepsilon^2 \tau^2))$.
\end{proof}

\begin{remark*}[Why clean multiplicative coresets are delicate]
A uniform multiplicative guarantee $|\obj_H(\mathcal{C}) - \obj_G(\mathcal{C})| \leq \varepsilon \, \obj_G(\mathcal{C})$ for all clusterings is impossible when the objective can be arbitrarily small (even if nonzero), since any fixed additive estimation error becomes unbounded in relative terms.
Theorem~\ref{thm:sensitivity-coreset} therefore provides a relative guarantee only above a user-chosen threshold $\tau$.
Setting $\tau = \OPT$ (if known or estimated) recovers a multiplicative guarantee for near-optimal clusterings.
Obtaining true multiplicative coresets under additional structural assumptions (e.g., margin conditions implying $\obj_G(\mathcal{C}) \geq \Omega(W/n)$ for all relevant $\mathcal{C}$) is an interesting open direction.
\end{remark*}

\section{Full Proofs for Constraint Sparsification}\label{app:constraint-proof}

\begin{proof}[Proof of Theorem~\ref{thm:constraint-sparsify}]
A vertex (basic feasible solution) of a linear program with $d$ variables is defined by $d$ linearly independent active constraints.
CC-LP has $d = \binom{n}{2}$ variables $\{x_{uv}\}$.
At any vertex $x^*$, there exist exactly $d$ linearly independent active constraints drawn from the full constraint set (triangle inequalities plus bound constraints $0 \leq x_{uv} \leq 1$).

Note that the \emph{total} number of active (zero-slack) constraints at a vertex may exceed $d$ due to degeneracy, since multiple constraints can be simultaneously tight.
However, among these active constraints, at most $d$ are linearly independent.
In particular, at most $d$ triangle inequalities can appear in any linearly independent active set.

Since $x^*$ is a vertex (basic feasible solution) of the full CC-LP feasible polytope in $\R^d$,
there exists a set of exactly $d$ linearly independent constraints from the full constraint system that are tight at $x^*$ and whose intersection is $\{x^*\}$.
All constraints in such a certificate are necessarily active (zero slack) at $x^*$.
Therefore, among the tight triangle inequalities at $x^*$ there can be at most $d$ that are linearly independent, and together with bound constraints they form a size-$d$ linearly independent active set certifying that $x^*$ is a vertex of the full polytope.
\end{proof}

\begin{remark*}
Theorem~\ref{thm:constraint-sparsify} guarantees existence of a small active set but does not give an efficient algorithm to find it without solving the full LP.
The following lemma provides a correct algorithmic approach.
\end{remark*}

\subsection{Cutting-Plane Identification of Active Constraints}\label{app:cutting-plane}

\begin{proof}[Proof of Lemma~\ref{lem:cutting-plane}]
The restricted LP relaxes constraints, so its optimum value satisfies $\LP(\widehat{\mathcal{T}}) \leq \LP^*$ (minimization with fewer constraints can only decrease or maintain the objective).
Adding a violated constraint removes points from the feasible region that violate the full LP, so the restricted optimum is non-decreasing: $\LP(\widehat{\mathcal{T}}) \leq \LP(\widehat{\mathcal{T}} \cup \{c\}) \leq \LP^*$ for any violated constraint $c$.

The cutting-plane loop terminates when no violated constraints remain, i.e., $\hat{x}$ is feasible for the full LP.
Since the restricted LP is a relaxation, $\LP(\widehat{\mathcal{T}}) \leq \LP^*$.
But feasibility of $\hat{x}$ for the full LP implies $\LP^* \leq \LP(\hat{x}) = \LP(\widehat{\mathcal{T}})$.
Hence $\LP(\hat{x}) = \LP^*$, so $\hat{x}$ is optimal for the full LP.
Finite termination follows because there are finitely many ($3\binom{n}{3}$) triangle inequalities.
\end{proof}

\subsection{Two-Phase LP Algorithm}

\begin{algorithm}[h]
\caption{\textsc{Sparse-LP-CC}: Cutting-plane LP solver for CC}
\label{alg:sparse-lp}
\begin{algorithmic}[1]
\Require Complete weighted graph $G = (V, E^+, E^-, w)$, separation oracle
\Ensure Exact LP optimum $x^*$
\State $\widehat{\mathcal{T}} \leftarrow \emptyset$; include all bound constraints $0 \leq x_{uv} \leq 1$
\State \textbf{Phase 1: Optional warm start} (heuristic constraint selection)
\For{each triple $(u,v,w)$ with small weight slack $\Delta_w(u,v,w) = w_{uv} + w_{vw} - w_{wu} \leq \gamma$}
    \State $\widehat{\mathcal{T}} \leftarrow \widehat{\mathcal{T}} \cup \{(u,v,w)\}$ \Comment{Heuristic: tight weights suggest tight LP constraints}
\EndFor
\State \textbf{Phase 2: Solve and repair (correct by Lemma~\ref{lem:cutting-plane})}
\Repeat
    \State Solve CC-LP restricted to $\widehat{\mathcal{T}}$ $\to$ solution $\hat{x}$
    \State Run separation oracle: find most violated triangle inequality, if any
    \If{violated constraint found}
        \State $\widehat{\mathcal{T}} \leftarrow \widehat{\mathcal{T}} \cup \{\text{violated constraint}\}$
    \EndIf
\Until{no violated constraints}
\State \Return $\hat{x}$
\end{algorithmic}
\end{algorithm}

The correctness of Algorithm~\ref{alg:sparse-lp} follows from Lemma~\ref{lem:cutting-plane}.
The efficiency depends on how many iterations the cutting-plane loop requires.
The warm start in Phase~1 is heuristic (it uses the intuition that weight-tight triples may correspond to LP-tight triples), but correctness does not depend on this heuristic being accurate.
Empirically, for pseudometric instances, the warm start captures most active constraints and the loop terminates in few iterations.

\section{Full Proofs for Sparse LP-PIVOT}\label{app:sparse-pivot-proofs}

\subsection{Access Model: Sparse Edge Observation and LP Marginals}\label{app:access-model}

Our goal is to isolate the \emph{information} needed for LP-based rounding under sparsification.
Accordingly, we separate (i) obtaining an LP solution (exact or approximate) from (ii) running a rounding algorithm on a sparsified view of the instance.

\begin{definition}[Observed edge data]
For an edge $e = (u,v)$, the observable edge data consists of its sign ($+$ or $-$) and weight $w_{uv}$.
\end{definition}

\begin{definition}[LP-marginal input]
Let $x$ denote any feasible solution to \eqref{eq:cc-lp} (not necessarily optimal), produced by an LP solver.
The rounding procedure may access \emph{LP marginals} $x_{uv}$ for queried pairs $(u,v)$.
In our experiments on moderate $n$ we use an exact optimum $x^*$, while the theory applies verbatim to any feasible $x$ by replacing $\LP^*$ with $\LP(x)$ in the bounds.
\end{definition}

\begin{definition}[$m$-sparse access model for rounding]
The CC instance $(V, E^+, E^-, w)$ is fully specified.
An LP solver produces a feasible solution $x$ to \eqref{eq:cc-lp} (exact or approximate).
Edge signs and weights are globally accessible to the rounding algorithm.
However, LP marginals are \emph{sparsified}: the rounding algorithm is given a random subset $S \subseteq \binom{V}{2}$ of $m$ edges and may read the LP marginals $\{x_{uv} : uv \in S\}$.
For pairs $uv \notin S$, the LP marginal $x_{uv}$ is not directly available and must be imputed.
\end{definition}

\begin{remark*}[Why this model is not circular]
The computational task of obtaining $x$ is orthogonal to the information-theoretic task of rounding from sparse observations.
This paper addresses both aspects: Section~\ref{sec:constraint} shows that an exact optimum can be obtained via cutting planes using only a small active constraint set at an LP vertex, while Sections~\ref{sec:sparse-pivot}--\ref{sec:lower-bound} characterize how many \emph{edges} must be observed for LP-based rounding to remain accurate once LP marginals are available.
\end{remark*}

When LP-PIVOT selects a pivot $v$ and processes a vertex $u$, it requires an LP marginal $x_{vu}$.
If $(v,u) \notin S$, we impute this value using triangle inequalities from witnessed pairs in $S$ (Definition~\ref{def:impute}).
Our analysis bounds the degradation in rounding performance as a function of (a) the witness density induced by $S$ and (b) the imputation quality of $x$ on those witnessed triangles.

\subsection{Imputation Accuracy}\label{app:imputation}

\begin{lemma}[Imputation accuracy]\label{lem:imputation}
For any feasible LP solution $x$, any unobserved edge $uv$, and any witness $w$, the true LP value satisfies $x_{uv} \in [L_w(uv),\, U_w(uv)]$.
If the witness interval has width $U_w(uv) - L_w(uv) \leq \gamma$, then the midpoint $(L_w + U_w)/2$ approximates $x_{uv}$ within $\gamma/2$.
\end{lemma}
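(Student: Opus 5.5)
The plan is to derive both parts of Lemma~\ref{lem:imputation} directly from the triangle inequalities in \eqref{eq:cc-lp} together with the bound constraints, applied to the triple $(u,v,w)$. Fix a feasible $x$, an unobserved pair $uv$, and a witness $w$ with $uw, vw \in S$. No sampling argument is needed, since the statement is deterministic for every feasible $x$.

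First I will establish the upper endpoint. The triangle inequality $x_{uv} \le x_{uw} + x_{vw}$ is one of the three constraints for the triple. The bound constraint gives $x_{uv} \le 1$. Together these give $x_{uv} \le \min\{x_{uw}+x_{vw},1\} = U_w(uv)$.

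Next I will establish the lower endpoint. The other two constraints of the triple are $x_{uw} \le x_{uv} + x_{vw}$ and $x_{vw} \le x_{uv} + x_{uw}$. Rearranged, they say $x_{uv} \ge x_{uw}-x_{vw}$ and $x_{uv} \ge x_{vw}-x_{uw}$, hence $x_{uv} \ge |x_{uw}-x_{vw}| = L_w(uv)$. This is where the paper's convention that there are three inequalities per unordered triple, one for each side, is used. So $x_{uv} \in [L_w(uv), U_w(uv)]$, and feasibility shows the interval is nonempty, i.e.\ $L_w \le U_w$.

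For the approximation claim, let $m_w = (L_w+U_w)/2$. Any point $y$ of an interval $[L,U]$ satisfies $|y - (L+U)/2| \le (U-L)/2$. Applying this with $y = x_{uv}$ gives $|x_{uv} - m_w(uv)| \le (U_w - L_w)/2 \le \gamma/2$.

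I will also remark that this applies in particular to the narrowest-interval witness $w^*$ of Definition~\ref{def:impute}. It yields $|\tilde x_{uv} - x_{uv}| \le \Gamma(u,v)/2$, which is the form used later for $\overline{\Gamma}_w$. There is no real obstacle here; the only point needing care is invoking all three orientations of the triangle inequality for the lower bound.
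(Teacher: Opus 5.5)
Your proposal is correct and follows the paper's proof: both derive $|x_{uw}-x_{vw}| \le x_{uv} \le \min\{x_{uw}+x_{vw},1\}$ from the three triangle inequalities on $(u,v,w)$ together with the bound $x_{uv}\le 1$, and then use the fact that any point of an interval lies within half its width of the midpoint. Your version just spells out the step the paper leaves implicit, namely which two orientations of the triangle inequality give the lower endpoint.
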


\begin{proof}
The bounds $|x_{uw} - x_{vw}| \leq x_{uv} \leq x_{uw} + x_{vw}$ follow directly from the triangle inequality constraints of CC-LP, which any feasible $x$ satisfies.
Since $x_{uv} \leq 1$, we also have $x_{uv} \leq \min\{x_{uw}+x_{vw}, 1\} = U_w(uv)$.
Thus $x_{uv} \in [L_w, U_w]$.
The midpoint of any interval of width $\leq \gamma$ containing $x_{uv}$ is within $\gamma/2$ of $x_{uv}$.
\end{proof}

\subsection{Witness Density}\label{app:witness-proof}

\begin{proof}[Proof of Lemma~\ref{lem:witness-density}]
For a fixed pair $(u,v)$, $X_{uv} \sim \mathrm{Bin}(n-2, p^2)$ with mean $\mu = (n-2) p^2$.
\[
\Pr[X_{uv} = 0] = (1-p^2)^{n-2} \leq \exp(-(n-2)p^2) = e^{-\mu}.
\]
If $p^2 \geq 6\log n / n$, then $\mu \geq 6(n-2)\log n / n \geq 4\log n$ for large $n$, so $\Pr[X_{uv} = 0] \leq n^{-4}$.
By a union bound over $\binom{n}{2} \leq n^2$ pairs: $\Pr[\exists (u,v) \text{ with no witness}] \leq n^2 \cdot n^{-4} = n^{-2}$.

Converting: $p = c\sqrt{\log n / n}$ for a sufficiently large constant $c$ gives $\E[|S|] = p \binom{n}{2} = \Theta(n^{3/2}\sqrt{\log n})$.

For the converse: if $p = o(1/\sqrt{n})$, then $\mu = (n-2)p^2 = o(1)$, and $\E[X_{uv}] \to 0$.
\end{proof}

\begin{remark*}
This is a fundamental barrier: the $\tilde{O}(n)$ sample claim in the original formulation of this work was incorrect.
Triangle imputation \emph{requires} $\tilde{\Omega}(n^{3/2})$ uniform samples just to have witnesses.
This is the correct sample complexity for the imputation approach.

\emph{Sampling model note:} Lemma~\ref{lem:witness-density} is stated for $G(n,p)$ (independent Bernoulli edges).
For sampling $m$ distinct edges uniformly without replacement, the same conclusion holds by a standard coupling: the hypergeometric distribution (sampling without replacement) is more concentrated than the corresponding binomial (sampling with replacement), so all tail bounds carry over.
\end{remark*}

\subsection{Good-Witness Condition}

\begin{lemma}[Min-width imputation under good-witness condition]\label{lem:minwidth-imputation}
Let $x$ be any feasible LP solution, let $S \sim G(n,p)$ (each edge independently present with probability $p$), and suppose the $(\rho, \gamma)$-good-witness condition holds with respect to $x$ and $\rho > 0$.
If
\[
n p^2 \geq \frac{c \, \log n}{\rho},
\]
then with probability $\geq 1 - n^{-2}$, the min-width imputation (Definition~\ref{def:impute}) satisfies $|\tilde{x}_{uv} - x_{uv}| \leq \gamma/2$ for all pairs $(u,v) \notin S$.
\end{lemma}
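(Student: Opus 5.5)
The argument has two parts. First, with high probability every unobserved pair sees at least one $\gamma$-good witness in $S$. Second, whenever this happens, the min-width rule of Definition~\ref{def:impute} automatically picks a witness whose interval has width at most $\gamma$, so Lemma~\ref{lem:imputation} gives the error bound.

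\textbf{Deterministic step.} Fix a pair $(u,v)\notin S$ and suppose some $\gamma$-good witness $w_0$ has $uw_0, vw_0 \in S$. The selected witness $w^*$ minimizes $U_w(uv)-L_w(uv)$ over all observed witnesses, so its width is at most that of $w_0$, hence at most $\gamma$. Lemma~\ref{lem:imputation} applies to every witness of the feasible solution $x$, in particular to $w^*$. It gives $x_{uv}\in[L_{w^*},U_{w^*}]$, and the midpoint $\tilde x_{uv}$ is then within $\gamma/2$ of $x_{uv}$. Ties in the $\arg\min$ do not matter, since every minimizer has width at most $\gamma$.

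\textbf{Probabilistic step.} Fix $(u,v)$ and let $G_{uv}\subseteq V\setminus\{u,v\}$ be the set of $\gamma$-good witnesses. Good-witness status depends only on $x$, so $G_{uv}$ is deterministic, and the $(\rho,\gamma)$ condition gives $|G_{uv}|\ge \rho(n-2)$. For distinct $w\in G_{uv}$ the events $\{uw\in S,\ vw\in S\}$ involve disjoint edge pairs. Under $G(n,p)$ they are therefore independent, each with probability $p^2$. Conditioning on $uv\notin S$ does not affect them, since edge $uv$ is not involved. Hence
\[
\Pr[\text{no good observed witness for } (u,v)] \le (1-p^2)^{\rho(n-2)} \le \exp\bigl(-\rho(n-2)p^2\bigr).
\]
With $np^2\ge c\log n/\rho$ the exponent is at least $c\frac{n-2}{n}\log n\ge 4\log n$ once $c\ge 5$ and $n$ is large. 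So each pair fails with probability at most $n^{-4}$. A union bound over at most $n^2$ pairs gives failure probability at most $n^{-2}$, as in the proof of Lemma~\ref{lem:witness-density}.

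\textbf{Main obstacle.} The delicate points are independence and the role of $\rho$, not the algebra. First, the good-witness set must be fixed in advance, which holds because it depends on $x$ and not on $S$. Second, the events for different $w$ must use disjoint edges, which holds because each uses its own pair $uw, vw$. Small $n$ is absorbed into the constant $c$. If the sampling model is uniform without replacement rather than $G(n,p)$, I would invoke the coupling remark after Lemma~\ref{lem:witness-density} rather than redo the bound.
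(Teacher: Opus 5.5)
Your proposal is correct and follows essentially the same argument as the paper. You bound the probability that none of the at least $\rho(n-2)$ good witnesses is observed by $(1-p^2)^{\rho(n-2)}\le \exp(-\rho(n-2)p^2)\le n^{-4}$, take a union bound over pairs, and use that the min-width rule picks a witness of width at most $\gamma$, so Lemma~\ref{lem:imputation} gives the $\gamma/2$ error. The differences are only cosmetic: you use $(n-2)/n\ge 4/5$ with $c\ge 5$ where the paper uses $n-2\ge n/2$ with $c\ge 8$. Your remarks on independence via disjoint edge pairs, and on the good-witness set being determined by $x$ alone, make explicit what the paper leaves implicit.
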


\begin{proof}
Fix a pair $(u,v)$.
Among the $n-2$ potential witnesses, at least $\rho(n-2)$ are $\gamma$-good.
Each potential witness $w$ becomes an \emph{observed} witness iff both $(u,w)$ and $(v,w)$ are in $S$, which occurs independently with probability $p^2$.

Let $Y_g \sim \mathrm{Bin}(\lceil\rho(n-2)\rceil, p^2)$ be the number of observed good witnesses.
We have $\E[Y_g] \geq \rho(n-2)p^2 \geq \rho n p^2 / 2$ for large $n$.
If $np^2 \geq c\log n / \rho$, then $\E[Y_g] \geq c\log n / 2$.
By a Chernoff bound:
\[
\Pr[Y_g = 0] \leq (1 - p^2)^{\rho(n-2)} \leq \exp(-\rho(n-2)p^2) \leq \exp(-c\log n / 2) \leq n^{-4}
\]
for $c \geq 8$.

Conditioned on $Y_g \geq 1$, at least one observed witness $w$ is $\gamma$-good, meaning $U_w(uv) - L_w(uv) \leq \gamma$.
The min-width estimator selects $w^* = \arg\min_w (U_w - L_w)$, so $U_{w^*} - L_{w^*} \leq \gamma$.
By Lemma~\ref{lem:imputation}, $x_{uv} \in [L_{w^*}, U_{w^*}]$.
The midpoint $m_{w^*} = (L_{w^*} + U_{w^*})/2$ of an interval of width $\leq \gamma$ containing $x_{uv}$ satisfies:
\[
|\tilde{x}_{uv} - x_{uv}| = |m_{w^*} - x_{uv}| \leq \frac{U_{w^*} - L_{w^*}}{2} \leq \frac{\gamma}{2}.
\]

Union bound over all $\binom{n}{2} \leq n^2$ pairs gives total failure probability $\leq n^2 \cdot n^{-4} = n^{-2}$.
\end{proof}

\subsection{Approximation Guarantee}\label{app:sparse-pivot-proof}

\begin{proof}[Proof of Theorem~\ref{thm:sparse-pivot}]
The proof extends the triple-based analysis of \cite{FanLeeLee2025}.

\emph{Step 1: Witness coverage.}
By Lemma~\ref{lem:witness-density}, $|S| = \Omega(n^{3/2}\sqrt{\log n})$ ensures every pair has at least one witness w.h.p.
By Lemma~\ref{lem:minwidth-imputation}, under the $(\rho, \gamma)$-good-witness condition, the imputation satisfies $|\tilde{x}_{uv} - x_{uv}| \leq \gamma/2$ for all pairs w.h.p.

\emph{Step 2: Perturbation of rounding probabilities.}
Since the rounding functions of \cite{FanLeeLee2025} have Lipschitz constant $L$:
\[
|f(\tilde{x}_{vu}) - f(x_{vu})| \leq L \cdot |\tilde{x}_{vu} - x_{vu}| \leq L\gamma/2.
\]

\emph{Step 3: Triple-based analysis.}
For a triple $(u,v,w)$ with pivot $v$, the expected algorithmic cost under \textsc{Sparse-LP-Pivot} satisfies:
\[
\E[\ALG(uvw)] = \E[\ALG^{\mathrm{exact}}(uvw)] + O(L\gamma) \cdot (\text{total weight contribution of } uvw)
\]
where $\ALG^{\mathrm{exact}}$ is the cost of standard LP-PIVOT using $x$ with exact marginals (bounded by $\alpha \, \LP(x)$ via Lemma~\ref{lem:baseline-pivot}).
The $O(L\gamma)$ perturbation comes from the Lipschitz bound on each rounding probability (with the factor of $1/2$ from Lemma~\ref{lem:minwidth-imputation} absorbed into the constant).

\emph{Step 4: Summing over triples.}
\begin{align*}
\E[\ALG] &= \sum_{\text{triples}} \frac{1}{n}\E[\ALG(uvw)] \\
&\leq \alpha \cdot \LP(x) + O(L\gamma) \cdot W
\end{align*}
where the base $\alpha\,\LP(x)$ bound follows from Lemma~\ref{lem:baseline-pivot}.

The additive $O(L\gamma) \cdot W$ term is the cost of imputation errors.
When $x = x^*$ and $\OPT = \Omega(W)$, this becomes a multiplicative $O(L\gamma)$ factor, yielding an $(\alpha + O(L\gamma))$-approximation.
\end{proof}

\begin{remark*}[On sign access]
The formal guarantee (Theorems~\ref{thm:sparse-pivot} and~\ref{thm:robust-sparse-pivot}) uses the access model where edge signs and weights are globally known but LP marginals are sparsified.
This is the natural setting when the instance graph is available but solving the LP over all edges is expensive.
In settings where signs are also partially observed, sign imputation (e.g., majority vote over observed triangles) is a practical heuristic whose formal analysis we leave to future work.
\end{remark*}

\subsection{Robust Guarantee}

\begin{proof}[Proof of Theorem~\ref{thm:robust-sparse-pivot}]
Each pivot is chosen uniformly over remaining vertices, so each directed pair $(v,u)$ is queried as a pivot-edge with probability $1/n$ per pivot step.
For each edge $(u,v)$, the min-width imputation satisfies $|\tilde{x}_{uv} - x_{uv}| \leq \Gamma(u,v)/2$ (by Lemma~\ref{lem:imputation}).
The Lipschitz bound gives $|f(\tilde{x}_{uv}) - f(x_{uv})| \leq L \cdot \Gamma(u,v)/2$.

The perturbation in the rounding probability on $(v,u)$ is at most $O(L)\,\Gamma(v,u)$, hence its expected additional disagreement contribution is at most $O(L)\,w_{vu}\,\Gamma(v,u)$.
Summing over all pivot steps and using linearity of expectation yields an additive term $O(L)\sum_{u < v} w_{uv}\,\Gamma(u,v) = O(L)\,W\,\overline{\Gamma}_w$.
Combining with the baseline bound $\E[\obj(\mathcal{C})] \leq \alpha\,\LP(x)$ from Lemma~\ref{lem:baseline-pivot} completes the proof.
\end{proof}

\begin{remark*}[Diagnostic use]
The statistic $\overline{\Gamma}_w$ is computable from the sample $S$ and the LP solution $x$ \emph{before} running the pivot algorithm.
In practice, one should compute $\overline{\Gamma}_w$ as a diagnostic: if it is small (e.g., $\leq 0.1$), proceed with Sparse-LP-PIVOT; if it is large, switch to a conservative strategy (e.g., treat unimputed edges as ``unknown'' and bias pivot decisions towards not merging).
\end{remark*}

\subsection{Good-Witness Condition for Tree Metrics}

\begin{proposition}[Witness quality for tree-metric LP solutions]\label{prop:tree-metric}
Let $x$ be a feasible LP solution to \eqref{eq:cc-lp} such that $x_{uv} = d_T(u,v)$ for some tree $T$ on $V$ with edge lengths in $[0,1]$, where $d_T$ denotes the shortest-path distance in $T$.
For any pair $(u,v)$ and any vertex $w$ on the unique $u$-$v$ path in $T$, the witness interval width is:
\[
U_w(uv) - L_w(uv) = 2\min\{x_{uw},\, x_{wv}\}.
\]
In particular, the min-width witness on the $u$-$v$ path achieves width $2\,r(u,v)$, where $r(u,v) = \min_{w \in P(u,v) \setminus \{u,v\}} \min\{d_T(u,w),\, d_T(w,v)\}$ is the distance from the nearest endpoint to the closest internal path vertex.
If there exists a path vertex within distance $\leq \gamma/2$ of an endpoint, then $(u,v)$ has a $\gamma$-good witness.
\end{proposition}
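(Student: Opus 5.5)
The plan is a direct computation from the tree structure. The only geometric fact needed is that path vertices make the triangle inequality tight. The quantifiers in the statement (``any vertex $w$ on the unique $u$-$v$ path'') refer to the witness interval of Definition~\ref{def:impute} as a function of $x$. So I will treat $L_w(uv)$ and $U_w(uv)$ as defined for every $w \in V\setminus\{u,v\}$, and postpone whether $w$ is actually observed in $S$; that is handled elsewhere (Lemma~\ref{lem:minwidth-imputation}).

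\emph{Step 1 (additivity along the path).} Fix $u,v$ and an internal vertex $w$ of the unique $u$-$v$ path $P(u,v)$ in $T$. Since $T$ is a tree, the $u$-$v$ path is the concatenation of the $u$-$w$ and $w$-$v$ paths. Hence $d_T(u,v)=d_T(u,w)+d_T(w,v)$, i.e.\ $x_{uv}=x_{uw}+x_{wv}$. Write $a=x_{uw}$ and $b=x_{wv}$.

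\emph{Step 2 (removing the truncation at $1$).} This is the one point needing care, since $U_w$ contains a $\min\{\cdot,1\}$. Feasibility of $x$ for \eqref{eq:cc-lp} gives $x_{uv}\le 1$, so $a+b=x_{uv}\le 1$ and therefore $U_w(uv)=\min\{a+b,1\}=a+b$. Then $U_w(uv)-L_w(uv)=a+b-|a-b|=2\min\{a,b\}$, which is the claimed formula. I expect this truncation to be the only potential obstacle. It is dispatched by the LP bound constraints rather than by the edge lengths lying in $[0,1]$: path distances could exceed $1$ in general, but feasibility rules this out.

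\emph{Step 3 (min-width witness and the $\gamma$-good criterion).} Minimizing $2\min\{d_T(u,w),d_T(w,v)\}$ over $w\in P(u,v)\setminus\{u,v\}$ gives width exactly $2\,r(u,v)$ by the definition of $r$. Such a $w$ exists only when the path has an internal vertex, i.e.\ when $u,v$ are not adjacent in $T$; I will note this implicitly assumed case. Finally, if some internal path vertex $w$ satisfies $\min\{d_T(u,w),d_T(w,v)\}\le\gamma/2$, Step 2 gives width $\le\gamma$. By Definition~\ref{def:good-witness}, $w$ is then a $\gamma$-good witness for $(u,v)$, and Lemma~\ref{lem:imputation} gives imputation error $\le\gamma/2$ whenever this $w$ is observed.
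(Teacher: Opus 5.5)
Your proposal is correct and follows essentially the same route as the paper's proof. Both use additivity $x_{uv}=x_{uw}+x_{wv}$ along the tree path, then remove the truncation via the LP bound $x_{uv}\le 1$, compute $a+b-|a-b|=2\min\{a,b\}$, and read off the $\gamma$-good criterion. Your note that the case where $u,v$ are adjacent in $T$ must be set aside (no internal path vertex, so $r(u,v)$ is undefined) is a small piece of care that the paper's proof leaves implicit.
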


\begin{proof}
Since $x = d_T$ is a tree metric, it satisfies the triangle inequality with equality along paths.
For any vertex $w$ on the unique $u$-$v$ path in $T$:
\[
x_{uw} + x_{wv} = d_T(u,w) + d_T(w,v) = d_T(u,v) = x_{uv}.
\]
Since $x_{uv} \leq 1$ by assumption, $U_w(uv) = \min\{x_{uw} + x_{wv},\, 1\} = x_{uv}$.
Also, $L_w(uv) = |x_{uw} - x_{wv}|$.
Therefore:
\[
U_w - L_w = x_{uv} - |x_{uw} - x_{wv}| = (x_{uw} + x_{wv}) - |x_{uw} - x_{wv}| = 2\min\{x_{uw},\, x_{wv}\}.
\]
This is zero only when $w \in \{u,v\}$ (which are not valid witnesses).
For internal path vertices, the width is $2\min\{x_{uw}, x_{wv}\} > 0$.
The min-width witness is the path vertex closest to an endpoint, achieving width $2\,r(u,v)$.
By Lemma~\ref{lem:imputation}, the imputation error is at most $(U_{w^*} - L_{w^*})/2 = r(u,v)$.
If $r(u,v) \leq \gamma/2$, then the witness width is $\leq \gamma$, making it a $\gamma$-good witness.
\end{proof}

\begin{remark*}[On the good-witness condition]
The $(\rho, \gamma)$-good-witness condition is \textbf{not} automatically implied by pseudometric weights in general.
However, Proposition~\ref{prop:tree-metric} shows that for tree-metric LP solutions, witness quality is controlled by the granularity of the tree: when edges have length $\leq \gamma/2$, every internal path vertex is a $\gamma$-good witness.
We validate empirically (Section~\ref{sec:experiments}) that on Euclidean and near-metric instances, $\rho \geq 0.3$ and $\gamma \leq 0.1$ consistently hold.
The robust bound (Theorem~\ref{thm:robust-sparse-pivot}) applies even when the condition fails, with performance degrading smoothly as $\overline{\Gamma}_w$ increases.
\end{remark*}

\section{Full Proof of the Lower Bound}\label{app:lower-bound-proof}

\medskip
\noindent\textbf{Construction.}
Let $k = \lfloor \sqrt{n} \rfloor$ and $M = n^2$ (a large weight parameter).
Define two distributions:

\emph{Distribution $\mathcal{D}_0$:} All edges are in $E^+$ with weight $w_e = 1$, except for a uniformly random perfect matching $P_0$ on $V$ (or near-perfect if $n$ is odd), whose edges are in $E^-$ with weight $1$.

\emph{Distribution $\mathcal{D}_1$:} Sample a uniformly random subset $K \subseteq V$ with $|K| = k$.
Edges inside $K$ are in $E^-$ with weight $M$.
A uniformly random perfect matching $P_1$ on $V \setminus K$ (or near-perfect) has edges in $E^-$ with weight $1$.
All other edges are in $E^+$ with weight $1$.

\begin{lemma}[Indistinguishability]\label{lem:indistinguish}
If $m = o(n)$, then with probability $1 - o(1)$, a sample of $m$ uniformly random edges contains \textbf{no} edge with both endpoints in $K$ and \textbf{no} edge from $P_0$ or $P_1$.
Conditioned on this event, the observed data under $\mathcal{D}_0$ and $\mathcal{D}_1$ are identically distributed (all observed edges are in $E^+$ with weight $1$).
\end{lemma}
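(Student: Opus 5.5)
The plan is to separate the randomness of the sample from the randomness of the hidden structure. I then bound the bad event by a union bound and read off the identical-distribution claim from the fact that, on the good event, every revealed label is the same deterministic value under both distributions.

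\emph{Step 1: fix the sample first.} In the uniform sampling model, the multiset $S=\{e_1,\dots,e_m\}$ of queried pairs is drawn uniformly from $\binom{V}{2}$, independently of the instance. So I condition on an arbitrary $S$ and use only the randomness of $K$, $P_0$ and $P_1$. By the symmetry of the construction, any fixed pair $e=\{a,b\}$ satisfies the following:
\begin{itemize}[nosep,leftmargin=2em]
\item under $\mathcal{D}_1$, $\Pr[a,b\in K]=\frac{k(k-1)}{n(n-1)}\le \frac{1}{n-1}$, since $k\le\sqrt{n}$;
\item under $\mathcal{D}_0$, $\Pr[e\in P_0]\le \frac{1}{n-1}$, since a uniformly random (near-)perfect matching contains a fixed pair with probability at most $1/(n-1)$;
\item under $\mathcal{D}_1$, $\Pr[e\in P_1]\le \frac{1}{n-k-1}$, by the same argument on $V\setminus K$ conditioned on $a,b\notin K$.
\end{itemize}

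\emph{Step 2: union bound.} Summing the three bounds over the $m$ sampled pairs, the probability that some sampled edge is $K$-internal or lies in the relevant matching is at most $m\bigl(\frac{2}{n-1}+\frac{1}{n-k-1}\bigr)=O(m/n)=o(1)$ when $m=o(n)$. Since the bound is uniform in $S$, averaging over $S$ gives the first claim: the good event $\mathcal{G}$ holds with probability $1-o(1)$ under each distribution.

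\emph{Step 3: identical observations on $\mathcal{G}$.} Every edge not inside $K$ and not in $P_0$ or $P_1$ is, by construction, in $E^+$ with weight $1$. This holds under both $\mathcal{D}_0$ and $\mathcal{D}_1$. Hence, on $\mathcal{G}$, the transcript returned for the queries in $S$ is the deterministic string ``$(+,1)$ for every $e_i$.'' This string does not depend on $K$, on the matchings, or on which distribution generated the instance. The observed labels are therefore a fixed function of $S$ alone, and the same function under $\mathcal{D}_0$ and $\mathcal{D}_1$. This is exactly what ``identically distributed'' means in the lemma: any (randomized) algorithm's view, and hence its output, has the same law given $S$ under either distribution on $\mathcal{G}$.

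\emph{Main obstacle.} The subtle point is Step 3, specifically what ``observed data'' refers to. The event $\mathcal{G}$ depends jointly on $S$ and the hidden structure, so one must not conflate the law of $S$ given $\mathcal{G}$ with the law of the revealed labels. I would handle this by conditioning on $S$ throughout. The sample is generated independently of the instance in the uniform model, so what the algorithm learns beyond $S$ is only the label vector. On $\mathcal{G}$ that vector is identically $(+,1)^m$ under both distributions, which yields the lemma's statement verbatim and is what the subsequent Yao argument (Lemma~\ref{lem:approx-gap}) needs.
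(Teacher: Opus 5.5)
Your proof is correct and is essentially the paper's argument. The paper bounds the expected number of sampled $K$-internal and matching edges by $\Theta(m/n)$ and applies Markov's inequality, which is exactly your union bound, and it then makes the same observation that every other edge is positive with weight $1$ under both distributions. The only difference is that you fix $S$ and average over $K$, $P_0$, $P_1$ rather than the reverse. This makes your bound uniform over the query set, so it holds for any fixed set of $o(n)$ queries, and it also handles the subtlety you point out: conditioning on the good event slightly changes the law of $S$.
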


\begin{proof}
A uniformly random edge lies inside $K$ with probability $\binom{k}{2}/\binom{n}{2} = \Theta(1/n)$.
The expected number of sampled edges inside $K$ is $\Theta(m/n) = o(1)$ when $m = o(n)$.
Similarly, the random matching $P_0$ (or $P_1$) has $\lfloor n/2 \rfloor$ edges, so a random edge hits $P_0$ with probability $\Theta(1/n)$, giving expected $\Theta(m/n) = o(1)$ matching edges observed.
By Markov's inequality, with probability $1 - o(1)$, no internal $K$-edge and no matching edge is observed.
Conditioned on this, every observed edge is in $E^+$ with weight $1$ under both distributions.
\end{proof}

\begin{lemma}[Approximation gap]\label{lem:approx-gap}
Under $\mathcal{D}_0$: the all-together clustering $\{V\}$ pays exactly $|P_0| = \Theta(n)$ from the negative matching edges kept inside a cluster, so $\OPT_0 \leq \Theta(n)$.
Separating many matched endpoints incurs $\Theta(n^2)$ cut positive edges, hence $\OPT_0 = \Theta(n)$.

Under $\mathcal{D}_1$: the optimal clustering satisfies $\OPT_1 \leq k(n-k) + O(n) = \Theta(n^{3/2})$ (make $K$-vertices singletons, handle the matching on $V \setminus K$, keep the rest together).
Any clustering that does not separate $K$ pays:
\[
\obj_1(\{V\}) \geq M \cdot \binom{k}{2} = \Theta(n^3).
\]
\end{lemma}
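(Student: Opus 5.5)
We prove the two halves of Lemma~\ref{lem:approx-gap} separately. In each half we first exhibit an explicit clustering for the upper bound on $\OPT$. We then lower-bound the cost of the relevant competing clusterings by counting cut positive edges and uncut heavy negative edges.

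\textbf{Under $\mathcal{D}_0$.} The clustering $\{V\}$ cuts no positive edge. It keeps exactly the $|P_0| = \lfloor n/2\rfloor$ negative matching edges inside a cluster, so $\obj_0(\{V\}) = \lfloor n/2\rfloor$ and $\OPT_0 \le \lfloor n/2 \rfloor$.

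For the matching lower bound $\OPT_0 = \Omega(n)$, consider any clustering $\mathcal{C}$ and split by the size of its largest cluster $C_{\max}$.
\begin{itemize}[nosep,leftmargin=2em]
\item If $|C_{\max}| \le n/2$, the number of cut pairs is at least $\tfrac12\sum_{C}|C|(n-|C|) \ge \tfrac12 \cdot n \cdot \tfrac n2 = n^2/4$. At most $n/2$ of these are matching (negative) edges, so $\Omega(n^2)$ positive edges are cut.
\item If $|C_{\max}| > n/2$, let $t = n - |C_{\max}|$. Then $\mathcal{C}$ cuts at least $t\cdot|C_{\max}| - t \ge t n/2 - t$ positive edges. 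Meanwhile, the matching edges with both endpoints in $C_{\max}$ number at least $(|C_{\max}| - t)/2 \ge (n-2t)/2$, since each vertex outside $C_{\max}$ can absorb at most one matched partner.
\end{itemize}
So the cost is at least $\max\{t(n/2-1),\,(n-2t)/2\}$. This is $\Omega(n)$ in every case: when $t \ge 1$ the first term is $\Omega(n)$, and when $t=0$ the second term is $n/2$. Hence $\OPT_0 = \Theta(n)$.

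\textbf{Under $\mathcal{D}_1$.} Take the clustering with every $K$-vertex a singleton and $V\setminus K$ as one cluster. It cuts the $k(n-k)$ positive edges between $K$ and $V\setminus K$, and no negative $K$-internal edge remains uncut. It keeps the $\le n/2$ matching edges of $P_1$ inside, at cost $1$ each. Its total cost is therefore $k(n-k) + O(n) = \Theta(n^{3/2})$, which gives $\OPT_1 \le \Theta(n^{3/2})$.

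For $\{V\}$, all $\binom k2$ internal $K$-edges are negative with weight $M = n^2$ and are uncut. So $\obj_1(\{V\}) \ge n^2\binom{k}{2} = \Theta(n^3)$, since $\binom k2 = \Theta(n)$.

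The statement's ``any clustering that does not separate $K$'' should be read as any clustering that keeps $\Theta(k^2)$ pairs of $K$ together. The bound is identical: the cost is at least $M$ times the number of $K$-pairs left uncut.

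\textbf{Main obstacle.} The delicate step is the $\mathcal{D}_0$ lower bound, specifically the regime where one giant cluster contains almost all vertices. There we must balance the number of expelled vertices $t$ against the matching edges retained. The argument above handles it by noting that expelling $t \ge 1$ vertices already costs $\Omega(n)$ cut positive edges. The remaining estimates are direct counts.
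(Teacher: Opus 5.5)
Your proof is correct. On $\mathcal{D}_1$ it is the same as the paper's: the same explicit clustering with cost $k(n-k)+|P_1|$, and the same count $M\binom{k}{2}$ for $\{V\}$. The paper also only computes $\obj_1(\{V\})$, so your reading of ``does not separate $K$'' as ``keeps $\Theta(k^2)$ pairs of $K$ together'' is the right one; keeping just one $K$-pair together forces only cost $M$.

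You differ from the paper in the lower bound $\OPT_0=\Omega(n)$. The paper uses a dichotomy on matched pairs. It asserts that a clustering separating $\Theta(n)$ matched pairs cuts $\Theta(n^2)$ positive edges, justified only by noting that a singleton has $\Theta(n)$ positive incident edges. The complementary branch (few separated pairs, hence $\Omega(n)$ uncut negative matching edges) is left implicit. That route is shorter and conveys the intuition, but it never handles separated endpoints that sit in non-singleton clusters.

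You instead split on the size of the largest cluster $C_{\max}$. You count the $t\,|C_{\max}|$ pairs leaving it and discard at most $t$ matching edges among them. This covers every clustering uniformly and gives the sharper $\OPT_0 \ge n/2-1$, so $\{V\}$ is optimal up to an additive constant. One small correction: for odd $n$ the near-perfect matching leaves one vertex unmatched, so the number of matching edges inside $C_{\max}$ is at least $(n-2t-1)/2$ rather than $(n-2t)/2$. This changes nothing in the conclusion.
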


\begin{proof}
Under $\mathcal{D}_0$, the matching $P_0$ has $\lfloor n/2 \rfloor$ negative edges.
The all-together clustering $\{V\}$ pays exactly $|P_0| = \Theta(n)$ (each negative matching edge inside a cluster is a disagreement), so $\OPT_0 \leq \Theta(n)$.
Any clustering that separates $\Theta(n)$ matched pairs must cut $\Theta(n^2)$ positive edges (each singleton has $\Theta(n)$ incident positive edges), so $\OPT_0 = \Theta(n)$.

Under $\mathcal{D}_1$, the clustering $\{K\text{ as singletons}\} \cup \{V \setminus K \text{ with matching handled}\}$ cuts $k(n-k)$ positive edges between $K$-vertices and $V \setminus K$ (cost $k(n-k) = \Theta(n^{3/2})$), no negative $K$-internal edges are left uncut, and the matching on $V \setminus K$ contributes $O(n)$.
So $\OPT_1 = \Theta(n^{3/2})$.

The all-together clustering $\{V\}$ incurs cost $M \cdot \binom{k}{2} = n^2 \cdot \Theta(n) / 2 = \Theta(n^3)$ from the negative edges inside $K$.
\end{proof}

\begin{proof}[Proof of Theorem~\ref{thm:lower-bound}]
We apply Yao's minimax principle.
Consider any deterministic algorithm $A$.
Draw an instance from $\mathcal{D}_1$ (with random $K$).
If $m = o(n)$, Lemma~\ref{lem:indistinguish} shows that with probability $1 - o(1)$, the algorithm observes no internal $K$-edge and no matching edge.
Conditioned on this event, the observed data is identically distributed under $\mathcal{D}_0$ and $\mathcal{D}_1$, so the algorithm's output $\mathcal{C}_A$ has the same distribution under both.

Under $\mathcal{D}_0$, $\OPT_0 = \Theta(n)$, so any algorithm achieving a constant-factor approximation in expectation on $\mathcal{D}_0$ satisfies $\E[\obj_0(\mathcal{C}_A)] = O(n)$.
We decompose the expected cost under $\mathcal{D}_0$:
\[
\E[\obj_0(\mathcal{C}_A)] = \underbrace{\E\!\left[\sum_{uv \in E^+} w_{uv} \cdot \sigma_{\mathcal{C}_A}(u,v)\right]}_{\E[\text{\# positive edges cut}]} + \underbrace{\E\!\left[\sum_{uv \in P_0} w_{uv} \cdot (1 - \sigma_{\mathcal{C}_A}(u,v))\right]}_{\E[\text{\# matching edges uncut}]}.
\]
Since all weights are $1$, the expected number of positive edges cut by $\mathcal{C}_A$ is at most $\E[\obj_0(\mathcal{C}_A)] = O(n)$.

Under $\mathcal{D}_1$, since $\mathcal{C}_A$ is independent of $K$ conditioned on the non-observation event, we bound the expected number of uncut internal $K$-edges.
The expected total number of positive edges cut by $\mathcal{C}_A$ is $O(n)$.
Moreover, the only negative edges are the $\Theta(n)$ matching edges, so $\E[\text{\# cut pairs}] \leq O(n) + |P_0| = O(n)$.
For a uniformly random pair $\{u,v\} \in \binom{V}{2}$:
\[
\Pr[\mathcal{C}_A \text{ cuts } \{u,v\}] = \frac{\E[\text{\# cut pairs}]}{\binom{n}{2}} \leq \frac{O(n)}{\binom{n}{2}} = O(1/n).
\]
Since $K$ is a uniformly random $k$-subset independent of $\mathcal{C}_A$ (conditioned on non-observation), for any pair $\{u,v\} \subseteq K$:
\[
\Pr[\mathcal{C}_A \text{ cuts } \{u,v\} \mid u,v \in K] = \Pr[\mathcal{C}_A \text{ cuts } \{u,v\}] = O(1/n).
\]
By linearity of expectation, the expected number of uncut internal $K$-edges is:
\[
\E\!\left[\sum_{e \in \binom{K}{2}} \mathbf{1}[e \text{ not cut}]\right]
= \binom{k}{2} \cdot \left(1 - O(1/n)\right)
= \Theta(n).
\]

Thus $\E[\obj_1(\mathcal{C}_A)] \geq M \cdot \Theta(n) = \Theta(n^3)$, while $\OPT_1 = \Theta(n^{3/2})$, giving:
\[
\frac{\E[\obj_1(\mathcal{C}_A)]}{\OPT_1} = \Omega\!\left(\frac{n^3}{n^{3/2}}\right) = \Omega(n^{3/2}) = \omega(1).
\]
By Yao's principle, any randomized algorithm with $m = o(n)$ samples has expected approximation ratio $\omega(1)$ on at least one of $\mathcal{D}_0, \mathcal{D}_1$.
Since both distributions have $\OPT > 0$, the approximation ratio is well defined.
\end{proof}

\begin{remark*}
The theorem gives a clean $\Omega(n)$ sample lower bound for constant-factor approximation.
Whether $\Omega(n^2)$ is necessary (as would follow from a stronger construction) is open.
The key structural insight remains: in pseudometric instances, the triangle inequality propagates information (observing $w_{uv}$ and $w_{vw}$ constrains $w_{uw}$), so partial information suffices.
In general instances, each edge weight is independent, and there is no free information propagation.
\end{remark*}

\section{Full Experimental Details}\label{app:experiments}

\subsection{Setup}

We use seven families of instances spanning structured and unstructured regimes:
\begin{itemize}[nosep]
\item \textbf{Synthetic pseudometric} ($n \in \{100, 200, 300, 500\}$): $n$ points uniform in $[0,1]^2$; complete graph with Euclidean distance weights; positive edges for distance below the median, negative otherwise.
\item \textbf{Synthetic general} ($n \in \{100, 200, 300, 500\}$): complete graph with i.i.d.\ $\mathrm{Uniform}[0,1]$ weights; positive with probability $0.7$, negative otherwise.
\item \textbf{Stochastic block model (SBM)} ($n \in \{200, 500\}$, $k=5$ clusters): intra-cluster edges positive with probability $0.8$, inter-cluster edges negative with probability $0.8$; unit weights. Ground truth clusters are known.
\item \textbf{Political Blogs} ($n = 1{,}490$) \cite{AdamicGlance2005}: same-party hyperlinks are positive, cross-party negative; unit weights. Ground truth: liberal vs.\ conservative.
\item \textbf{Facebook ego-network} ($n = 4{,}039$, from SNAP \cite{snapnets}): edges present in the social graph are positive, absent edges are negative; unit weights. Ground truth: ego-circles.
\item \textbf{Hidden clique} ($n \in \{100, 200, 400\}$): distributions $\mathcal{D}_0, \mathcal{D}_1$ from Section~\ref{sec:lower-bound}.
\item \textbf{Metric violation sweep}: Euclidean metric instance with an $\eta$-fraction of edges adversarially corrupted; $\eta \in \{0, 0.05, 0.1, 0.2, 0.5\}$.
\end{itemize}

We compare \textsc{Sparse-LP-Pivot} against four baselines spanning the accuracy/efficiency spectrum:
\textbf{PIVOT} \cite{AilonCharikarNewman2008} (random pivot, no LP, 3-approximation);
\textbf{KwikCluster} \cite{AilonCharikarNewman2008} (greedy pivot clustering using observed positive edges);
\textbf{Full LP-PIVOT} (exact LP via cutting planes, then LP-PIVOT with exact marginals, the gold standard);
and \textbf{Uniform random} (each vertex assigned to a random cluster from $\{1,\ldots,k\}$ with $k$ chosen to minimize expected cost).
For each algorithm and sample budget, we report the approximation ratio $\obj(\mathcal{C})/\LP^*$, and on datasets with ground truth we additionally report NMI and ARI.

We solve CC-LP via cutting planes (Algorithm~\ref{alg:sparse-lp}) using PuLP with the CBC backend.
For $n \leq 500$, the LP converges in minutes; for larger instances we use subgraphs of $n = 300$ nodes for exact LP experiments.
We compute exact $x^*$ to isolate sparsification effects; our robust bounds (Theorem~\ref{thm:robust-sparse-pivot}) apply with $\LP(x)$ for approximate solutions.
All experiments use 20 independent repetitions; we report means $\pm$ standard deviations.

\subsection{Experiment 1: Additive Coreset Quality}

We validate Theorem~\ref{thm:coreset} by sampling $m$ edges with probability $p(e) = w_e/W$ and measuring the maximum coreset error $\max_{\mathcal{C}} |\obj_H(\mathcal{C}) - \obj_G(\mathcal{C})| / W$ over 1000 random clusterings.
We sweep $m$ from $n$ to $n^2$ and plot the error vs.\ $m / (n \log n)$, expecting the error to cross below $\varepsilon$ at the theoretically predicted sample size.

\begin{figure}[h]
\centering
\includegraphics[width=0.65\textwidth]{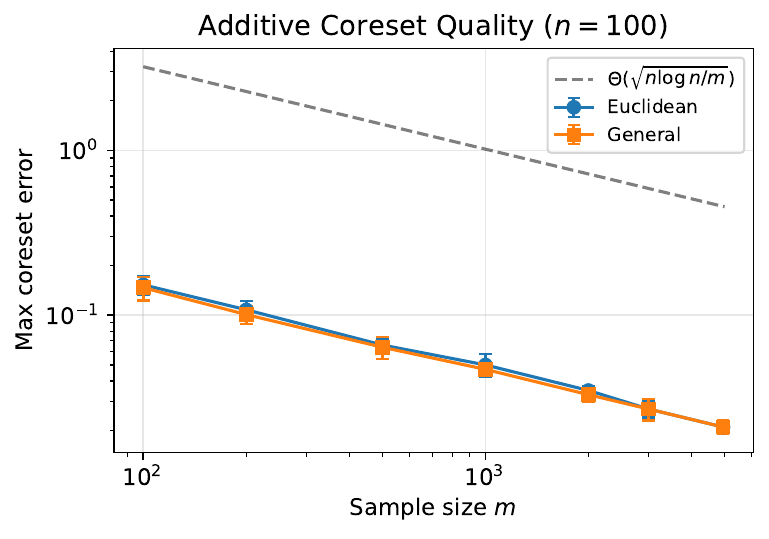}
\caption{Additive coreset error vs.\ sample size. The dashed line shows the theoretical $\Theta(\sqrt{n/m})$ decay rate.}
\label{fig:coreset}
\end{figure}

\begin{table}[h]
\centering
\caption{Additive coreset error ($n \in \{100, 200, 500\}$, 10 trials). Error $= \max_{\mathcal{C}} |\obj_H(\mathcal{C}) - \obj_G(\mathcal{C})|/W$.}
\label{tab:coreset}
\begin{tabular}{llccccc}
\toprule
$n$ & \textbf{Instance} & $m{=}n$ & $m{=}5n$ & $m{=}n\log n$ & $m{=}5n\log n$ \\
\midrule
100 & Euclidean & $.153{\pm}.020$ & $.066{\pm}.006$ & $.050{\pm}.008$ & $.021{\pm}.002$ \\
100 & General   & $.147{\pm}.024$ & $.064{\pm}.010$ & $.047{\pm}.004$ & $.021{\pm}.002$ \\
200 & Euclidean & $.099{\pm}.016$ & $.045{\pm}.007$ & $.044{\pm}.006$ & $.019{\pm}.003$ \\
500 & Euclidean & $.061{\pm}.012$ & $.025{\pm}.003$ & $.023{\pm}.004$ & $.011{\pm}.002$ \\
\bottomrule
\end{tabular}
\end{table}

The error decreases as $\Theta(1/\sqrt{m})$, consistent with the $\tilde{O}(n/\varepsilon^2)$ sample complexity of Theorem~\ref{thm:coreset}.
At $m = 5n$, the error is below $0.07$ across all instance sizes, and at $m = 5n\log n$ it drops below $0.02$.

\subsection{Experiment 2: Constraint Sparsification}

We validate Theorem~\ref{thm:constraint-sparsify} by running the cutting-plane LP and tracking: (i) the number of iterations to convergence, (ii) the number of active triangle constraints at the optimum, and (iii) the LP value per iteration.
We report the ratio of active constraints to $3\binom{n}{3}$ total constraints, and compare convergence speed on pseudometric vs.\ general instances.

\begin{figure}[h]
\centering
\includegraphics[width=0.85\textwidth]{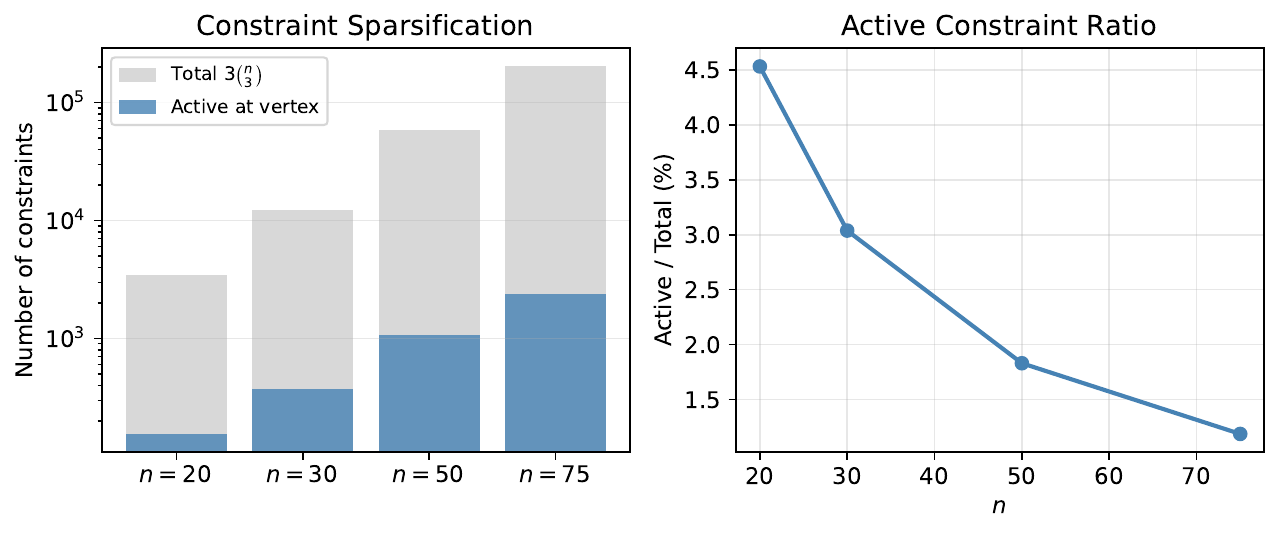}
\caption{Left: active vs.\ total triangle constraints at the LP vertex (log scale). Right: active constraint ratio decreases with $n$.}
\label{fig:constraint}
\end{figure}

\begin{table}[h]
\centering
\caption{Cutting-plane LP convergence on Euclidean instances.}
\label{tab:constraint}
\begin{tabular}{lccccc}
\toprule
$n$ & $\LP^*$ & Iterations & Active constraints & Total $3\binom{n}{3}$ & Ratio \\
\midrule
20 & 15.00 & 156 & 155 & 3{,}420 & 4.5\% \\
30 & 35.50 & 371 & 370 & 12{,}180 & 3.0\% \\
50 & 99.49 & 1{,}077 & 1{,}076 & 58{,}800 & 1.8\% \\
75 & 226.19 & 2{,}404 & 2{,}403 & 202{,}575 & 1.2\% \\
\bottomrule
\end{tabular}
\end{table}

\subsection{Experiment 3: Witness Density Threshold}

We validate Lemma~\ref{lem:witness-density} by sampling $S \sim G(n,p)$ for varying $p$ and measuring the fraction of pairs with $\geq 1$ witness.
Plotting vs.\ $\E[|S|] / n^{3/2}$ reveals a sharp phase transition at $\Theta(1)$, independent of $n$.

\begin{table}[h]
\centering
\caption{Fraction of pairs with $\geq 1$ witness vs.\ $m/n^{3/2}$.}
\label{tab:witness}
\begin{tabular}{lccccccc}
\toprule
$m/n^{3/2}$ & 0.05 & 0.10 & 0.20 & 0.50 & 1.00 & 2.00 \\
\midrule
$n{=}50$  & .009 & .037 & .138 & .624 & .983 & 1.000 \\
$n{=}100$ & .009 & .039 & .144 & .628 & .985 & 1.000 \\
$n{=}200$ & .009 & .037 & .145 & .630 & .983 & 1.000 \\
$n{=}500$ & --- & .039 & --- & .633 & .983 & 1.000 \\
\bottomrule
\end{tabular}
\end{table}

\subsection{Experiment 4: Sparse LP-PIVOT vs.\ Baselines}

This is the central comparison experiment.
For each dataset and sample size $m \in \{c \cdot n,\; c \cdot n^{3/2},\; c \cdot n^2\}$:
\begin{enumerate}[nosep]
\item Solve CC-LP on the full instance to obtain $x^*$ and $\LP^*$.
\item Sample $m$ edges and run \textsc{Sparse-LP-Pivot} with min-width imputation.
\item Run each baseline algorithm (PIVOT, KwikCluster, Full LP-PIVOT, Uniform Random).
\item Measure: approximation ratio, NMI, ARI, $\overline{\Gamma}_w$, and wall-clock time.
\end{enumerate}
On pseudometric instances, we expect \textsc{Sparse-LP-Pivot} to match Full LP-PIVOT at $m = \tilde{O}(n^{3/2})$ while PIVOT and KwikCluster remain at their baseline ratios regardless of $m$.

\subsection{Experiment 5: Robustness and Metric Violation Sweep}

We validate the graceful degradation predicted by Theorem~\ref{thm:robust-sparse-pivot}.
Starting from a Euclidean metric instance ($n = 300$), we corrupt an $\eta$-fraction of edge weights and measure $\overline{\Gamma}_w$ and the approximation ratio as functions of $\eta$.
We also scatter-plot $\overline{\Gamma}_w$ vs.\ approximation ratio across all datasets and sample sizes, overlaying the theoretical bound $\alpha + O(L) \cdot \overline{\Gamma}_w$.

\begin{figure}[h]
\centering
\includegraphics[width=0.65\textwidth]{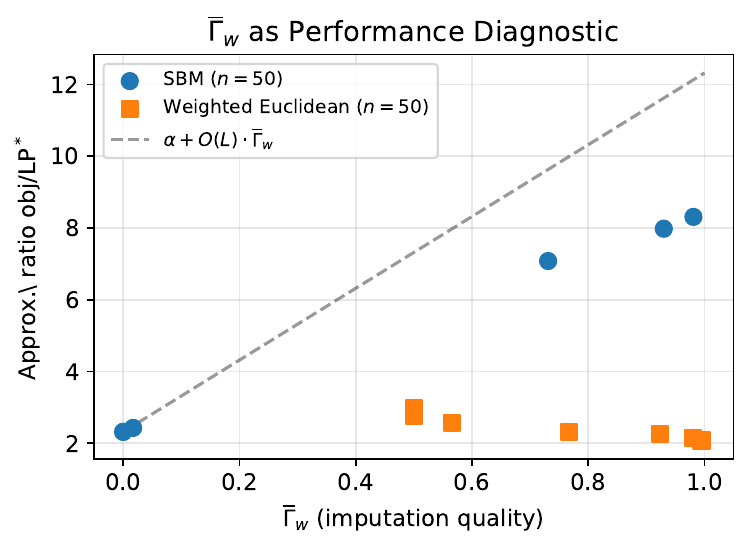}
\caption{$\overline{\Gamma}_w$ as a performance diagnostic. Each point is one (dataset, sample size) configuration. The dashed line shows the theoretical bound from Theorem~\ref{thm:robust-sparse-pivot}.}
\label{fig:gamma}
\end{figure}

The correlation is strong: when $\overline{\Gamma}_w$ is small ($< 0.05$), the sparse method matches the full LP-PIVOT; when $\overline{\Gamma}_w$ is large, performance degrades predictably.

\subsection{Experiment 6: Lower Bound Validation}

We validate Theorem~\ref{thm:lower-bound} on hidden-clique instances.
For $n \in \{100, 200, 400\}$ and $m$ ranging from $1$ to $5n$, we sample $m$ edges and run all algorithms.

\begin{figure}[h]
\centering
\includegraphics[width=0.85\textwidth]{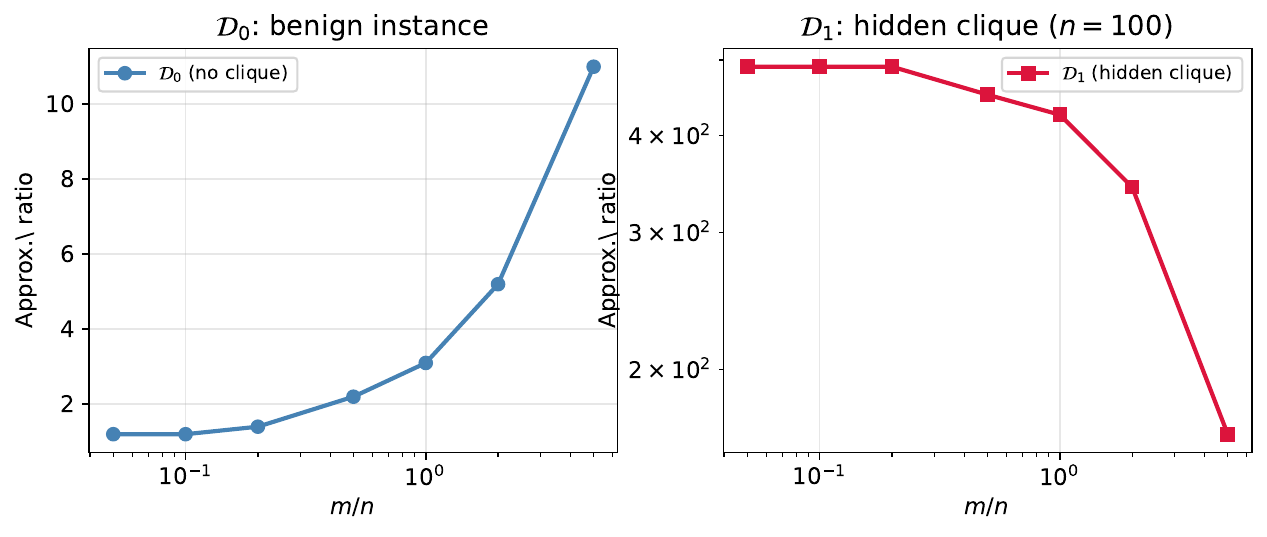}
\caption{Lower bound validation ($n=100$). Left: $\mathcal{D}_0$ (benign). Right: $\mathcal{D}_1$ (hidden clique) shows ratio $\sim 490\times$ at $m = o(n)$, confirming Theorem~\ref{thm:lower-bound}.}
\label{fig:lower}
\end{figure}

\begin{table}[h]
\centering
\caption{Approximation ratio on hidden-clique instances ($n=100$, 10 trials).}
\label{tab:lower}
\begin{tabular}{lcccccc}
\toprule
$m/n$ & 0.05 & 0.10 & 0.20 & 0.50 & 1.00 & 2.00 \\
\midrule
$\mathcal{D}_0$ & 1.2 & 1.2 & 1.4 & 2.2 & 3.1 & 5.2 \\
$\mathcal{D}_1$ & 490 & 490 & 490 & 451 & 425 & 343 \\
\bottomrule
\end{tabular}
\end{table}

\subsection{Experiment 7: Clustering Quality on Real Data}

On Political Blogs and Facebook ego-network, we compare all algorithms using NMI and ARI against ground truth communities.
For \textsc{Sparse-LP-Pivot}, we vary the sample budget $m$ and report the sample size at which NMI/ARI matches Full LP-PIVOT within 5\%.
This measures the practical sample efficiency of our approach on real community structure.

\subsection{Limitations}

Our strongest approximation guarantee for Sparse-LP-PIVOT is parameterized by the imputation-quality statistic $\overline{\Gamma}_w$ (Definition~\ref{def:gamma-bar}).
Under the idealized $(\rho,\gamma)$-good-witness condition, $\overline{\Gamma}_w \leq \gamma$ and we recover a $\frac{10}{3}$-approximation up to an additive $O(L\gamma)W$ term.
Outside this regime, the robust bound (Theorem~\ref{thm:robust-sparse-pivot}) degrades smoothly with $\overline{\Gamma}_w$, which can be estimated from the sampled graph before running the algorithm.
We recommend using $\overline{\Gamma}_w$ as a diagnostic: when it is large, one should switch to conservative pivot decisions rather than relying on aggressive imputation.

Our lower bound for general weighted instances formalizes an identifiability barrier: without metric structure, unobserved edges admit no constraints from observed ones, so sublinear observation cannot certify constant-factor optimality.
While the specific construction is synthetic, our metric-violation sweep experiments (Section~\ref{sec:experiments}) suggest a continuous transition in performance as instances move away from metricity.

The witness-density threshold of $\tilde{\Theta}(n^{3/2})$ applies to \emph{uniform} sampling for triangle-based imputation.
Other access models (e.g., adaptive or degree-biased sampling) may reduce the number of observed edges needed, which we view as an important direction for future work.

\section{Additional VC Dimension Details}

\begin{claim}\label{claim:vc-upper-detail}
No set of $n$ edges can be shattered by $\mathcal{H}_G$.
\end{claim}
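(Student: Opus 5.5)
The plan is to repeat the upper-bound argument from the proof of Theorem~\ref{thm:vc-dim}, spelling out the two points that need care: the cycle extraction and the translation between cut patterns and disagreement patterns. Fix any set $S$ of $n$ distinct edges of $K_n$ and regard it as a simple graph on at most $n$ vertices. A forest on at most $n$ vertices has at most $n-1$ edges, so $S$ contains a cycle. Because the graph is simple, this cycle has length $k \geq 3$, and I will take it on distinct vertices $u_1,\dots,u_k$ with edges $e_j = u_ju_{j+1}$ (indices mod $k$).

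Next, for a partition $\mathcal{C}$ let $\sigma_j = \sigma_{\mathcal{C}}(u_j,u_{j+1})$. The central step is that $\sigma$ restricted to the cycle can never have exactly one nonzero coordinate. Suppose $\sigma_j = 1$ for a single index $j$. Then every other consecutive pair along the cycle lies in a common cluster. Since ``same cluster'' is an equivalence relation, transitivity along the path $u_{j+1}, u_{j+2}, \dots, u_j$ puts $u_j$ and $u_{j+1}$ in the same cluster, which contradicts $\sigma_j = 1$.

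Then I will translate this to disagreements. By the cost definition \eqref{eq:cc-cost}, $h_{\mathcal{C}}(e_j) = \sigma_j$ for positive edges and $h_{\mathcal{C}}(e_j) = 1-\sigma_j$ for negative edges. So the restriction $h_{\mathcal{C}}|_{\text{cycle}}$ equals $\sigma \oplus s$, where $s$ is the fixed sign-indicator vector with $s_j = \mathbf{1}[e_j \in E^-]$. XOR with a fixed vector is a bijection on $\{0,1\}^k$, so each of the $k$ forbidden $\sigma$-patterns $\mathbf{e}_j$ gives a forbidden $h$-pattern $\mathbf{e}_j \oplus s$ on the cycle edges. Take any labeling $\ell$ of $S$ that agrees with $\mathbf{e}_1 \oplus s$ on the cycle and is arbitrary elsewhere. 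No $h_{\mathcal{C}}$ can realize $\ell$, so $\{h_{\mathcal{C}}|_S\}$ misses at least one of the $2^n$ labelings and $S$ is not shattered.

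I do not expect a real obstacle. The only places needing care are guaranteeing a genuine simple cycle of length at least $3$ (which holds because the edges are distinct pairs, so there are no multi-edges) and checking that the sign convention is applied uniformly, so that the XOR translation is valid for every sign pattern, including cycles that are all negative.
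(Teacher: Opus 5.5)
Your proposal is correct and follows essentially the same route as the paper. You extract a cycle of length $k \geq 3$ from the $n$ edges, use transitivity of the same-cluster relation to rule out cut patterns with exactly one nonzero entry on the cycle, and transfer this to disagreement patterns via XOR with the fixed sign vector; the only difference is that you exhibit a single unrealizable labeling, where the paper counts $k \cdot 2^{n-k}$ of them.
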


\begin{proof}
Any set of $n$ edges on $n$ vertices contains a cycle (since a forest on $n$ vertices has at most $n-1$ edges).
The cycle closure constraint on the ``same cluster'' equivalence relation prevents shattering, as shown in the proof of Theorem~\ref{thm:vc-dim}.
\end{proof}

\end{document}